\theoremstyle{plain}
\newtheorem{definition}{Definition}
\newtheorem{proposition}{Proposition}
\newcommand{\bE}{\mathbb E}
\newcommand{\bS}{\mathbb S}
\newcommand{\bR}{\mathbb R}
\newcommand{\bZ}{\mathbb Z}
\newcommand{\bM}{\mathbb M}
\newcommand{\kL}{\mathfrak L}
\newcommand{\kC}{\mathfrak C}
\newcommand{\kF}{\mathfrak F}
\newcommand{\kH}{\mathfrak H}
\newcommand{\ku}{\mathfrak u}
\newcommand{\kR}{\mathfrak R}
\newcommand{\sC}{\mathscr C}
\newcommand{\cS}{\mathcal S}
\newcommand{\cU}{\mathcal U}
\newcommand{\cG}{\mathcal G}
\newcommand{\cH}{\mathcal H}
\newcommand{\cL}{\mathcal L}
\newcommand{\cC}{\mathcal C}
\newcommand{\cF}{\mathcal F}
\newcommand{\cB}{\mathcal B}
\newcommand{\cO}{\mathcal O}
\newcommand{\cD}{\mathcal D}
\newcommand{\cM}{\mathcal M}
\newcommand\rEM{{\rm EM}}
\newcommand\rmD{{\rm D}}
\newcommand\rRS{{\rm RS}}
\def\<{\langle} 
\def\>{\rangle}
\def\sm{\setminus}
\newcommand{\ts}{\textstyle}
\newcommand{\vp}{\varphi}
\newcommand{\ve}{\varepsilon}
\def\ray{\Re_z(p)}
\newcommand{\fp}{\mathbf p}
\newcommand{\fx}{\mathbf x}
\newcommand{\fV}{\mathbf V}
\newcommand{\fy}{\mathbf y}
\newcommand{\dx}{\dot x}
\newcommand{\dfq}{{\dot{\mathbf q}}}
\newcommand{\dfnt}{{\dot{\mathrm n}}_\theta} 
\newcommand{\dfn}{{\dot{\mathrm n}}}
\newcommand{\dfv}{{\dot{\mathbf v}}}
\newcommand{\dfe}{\dot{\mathbf e}}
\def\lag{\mathscr{L}} 
\DeclareMathOperator\tcurv{\mathcal{K}}
\DeclareMathOperator\curl{curl}
\DeclareMathOperator\Lip{Lip}
\DeclareMathOperator\JS{JS}
\DeclareMathOperator\RS{SR}
\DeclareMathOperator\RG{GR}
\newcommand{\oset}[3][0ex]{%
  \mathrel{\mathop{#3}\limits^{
    \vbox to#1{\kern-2\ex@
    \hbox{$\scriptstyle#2$}\vss}}}}
\newlength\savewidth
\newcommand\shline{\noalign{\global\savewidth\arrayrulewidth\global\arrayrulewidth 1.0pt}\hline\noalign{\global\arrayrulewidth\savewidth}}
\newlength\savedwidth
\begin{document}

\title{Geodesic Models with Convexity Shape Prior}
\author{Da Chen, Jean-Marie Mirebeau, Minglei Shu, Xuecheng Tai and~Laurent D. Cohen,~\IEEEmembership{Fellow,~IEEE}

\IEEEcompsocitemizethanks{
\IEEEcompsocthanksitem Da Chen and Minglei Shu are with Qilu University of Technology (Shandong Academy of Sciences),  Shandong Artificial Intelligence Institute, Jinan, China. e-mail: dachen.cn@hotmail.com, shuml@sdas.org\protect 
\IEEEcompsocthanksitem Jean-Marie Mirebeau is with Department of Mathematics, Centre Borelli, ENS Paris-Saclay, CNRS, University Paris-Saclay, 91190, Gif-sur-Yvette, France. e-mail: jean-marie.mirebeau@ens-paris-saclay.fr
\protect
\IEEEcompsocthanksitem Xue-Cheng Tai is with Hong Kong Centre for Cerebro-cardiovascular Health Engineering Building 19W, Hong Kong Science Park, Shatin, N.T., Hong Kong.  e-mail: xuechengtai@gmail.com  \protect
\IEEEcompsocthanksitem Laurent D. Cohen is with University Paris Dauphine, PSL Research University, CNRS, UMR 7534, CEREMADE, 75016 Paris, France. e-mail: cohen@ceremade.dauphine.fr\protect\\
}
\thanks{}
}

\markboth{Journal of \LaTeX\ Class Files,~Vol.~14, No.~8, August~2021}%
{Shell \MakeLowercase{\textit{et al.}}: Bare Demo of IEEEtran.cls for Computer Society Journals}

\IEEEtitleabstractindextext{
\begin{abstract}
The minimal geodesic models established upon the eikonal equation framework are  capable of finding suitable solutions in various image segmentation scenarios. Existing  geodesic-based segmentation approaches usually exploit  image features in conjunction with geometric regularization terms, such as Euclidean curve length or curvature-penalized length, for computing geodesic curves. In this paper, we take into account a more complicated problem: finding curvature-penalized geodesic paths with a convexity shape prior. We establish new geodesic models relying on the strategy of orientation-lifting, by which a planar curve can be mapped to an high-dimensional orientation-dependent space. The convexity shape prior serves as a constraint for the construction of local geodesic metrics encoding a particular curvature constraint. Then the geodesic distances and the corresponding closed geodesic paths  in the orientation-lifted space can be efficiently computed through state-of-the-art Hamiltonian fast marching method. In addition, we apply the proposed geodesic models to the active contours, leading to efficient interactive image segmentation algorithms that preserve the advantages of convexity shape prior and curvature penalization.
\end{abstract}

\begin{IEEEkeywords}
Geodesic, convexity shape prior, curvature penalization, eikonal equation, fast marching method, image segmentation.
\end{IEEEkeywords}}

\maketitle
\IEEEdisplaynontitleabstractindextext

\IEEEpeerreviewmaketitle

\IEEEraisesectionheading{\section{Introduction}
\label{sec:introduction}}
\IEEEPARstart{i}{mage} segmentation is a fundamental and challenging problem in many fields of image analysis, computer vision and medical imaging. Segmentation approaches based on the energy minimization frameworks are able to cope with a great amount of complicated segmentation tasks. As important advantages, these segmentation models can benefit from significant flexibility in the accommodation of various image appearance features, and particularly in the use of efficient shape priors for image segmentation.  

As considered in many variational segmentation models, a typical  energy functional usually contains an image appearance term such as the region-based homogeneity measure, and a geometry regularization term which in essence implicitly defines shape priors for the targets.
 The euclidean  curve length, which usually taken as a first-order regularizer, has proven its efficiency in suppressing ambiguous segmentations, thus extensively exploited in either active contour models~\cite{chan2001active,chan2006algorithms,cremers2007review} or graph optimization-based models~\cite{boykov2006grapy,couprie2011power}.  The second-order curvature penalization is regarded as a significant variant of the curve length-based regularization, which is capable of producing smooth segmentation contours~\cite{schoenemann2012linear,el2016contrast,zhu2013image}. However, it is sometimes insufficient to search for favorable segmentation results exploiting only geometric regularities as shape priors, especially when handling images with complicated gray level distribution or  edges of weak visibility. In contrast, the strategy of incorporating shape-driven priors into the energy functionals is able to yield more strong and efficient constraints for image segmentation, thus can increase the segmentation accuracy. These shape priors are often carried out via a statistical model associated with the target shapes or contours~\cite{prevost2014tagged,bresson2006variational,cremers2007review,chan2005level,cremers2008shape}. The implementation of the statistical shape priors can encourage satisfactory segmentations, even in the absence of reliable region-based appearance features and edge saliency features used for distinguishing distinct subregions.

\begin{figure*}[t]
\centering
\includegraphics[height=3.2cm]{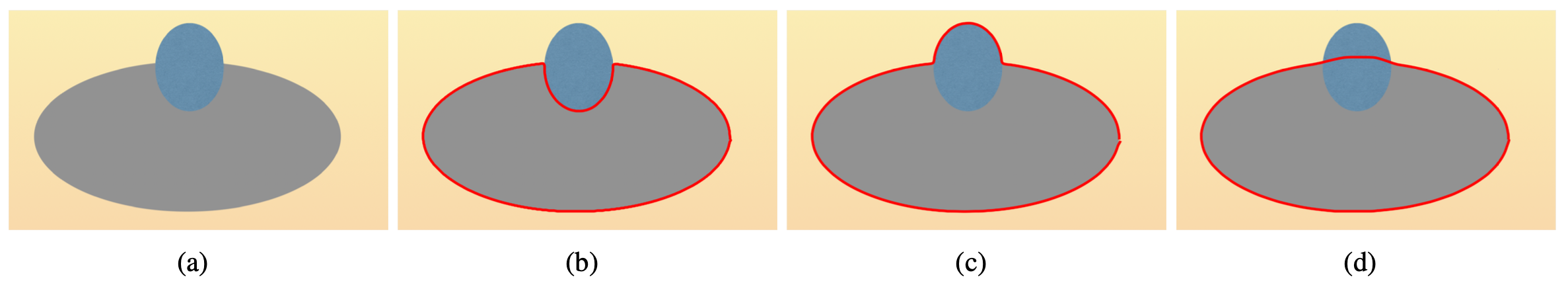}
\caption{A comparison example for geodesic curves from different models. \textbf{a} A synthetic image. \textbf{b} to \textbf{d}: Results from the region-based model~\cite{chen2016finsler}, the Euler-Mumford elastica model~\cite{chen2017global} and the proposed elastica model with the convexity shape prior.}
\label{fig_Demo}
\end{figure*} 

Recently, the  convexity and star convexity shape constraints were introduced as flexible priors. Existing image segmentation approaches in conjunction with these shape priors can be loosely categorized as either discrete or continuous types. In the discrete setting, the convexity prior~\cite{gorelick2016convexity}, the star convexity prior~\cite{veksler2008star}, or geodesic star convexity~\cite{gulshan2010geodesic} are characterized as a regularization term to construct the discrete energy functionals together with image data-driven terms. The energy minimization can be addressed by the  efficient graph cut algorithm~\cite{boykov2006grapy}. In~\cite{royer2016convexity,gorelick2017multi}, the convexity prior was incorporated into graph-based segmentation framework to solve multi-region segmentation tasks. The hedgehog-like shape prior~\cite{isack2016hedgehog} was introduced to image segmentation as a practical generalization of the star convexity constraints~\cite{veksler2008star,gorelick2017multi}.  Isack~\emph{et al.}~\cite{isack2018kconvexity} proposed a flexible k-convexity prior-based segmentation model allowing overlaps between different regions. However, these graph-based approaches with shape prior constraints did not consider the curvature regularization. 

In the continuous setting, the convexity shape prior is usually exploited as a constraint in the active contour models~\cite{luo2019convex,shi2021convexity,yan2020convexity,bae2017augmented}. Among them,  the curvature property is taken as a crucial feature to characterize the convexity shape prior during the curve evolution. 
Specifically, in~\cite{luo2019convex,yan2020convexity}, the authors revealed the relationship between the convexity property of a region and the signed distance map associated to its boundary. They proved  that a region is convex if the Laplacian of its signed distance map, which approximates the curvature of its iso-contours,  has a constant sign within this region. In this approach, the convexity shape prior is used to build the search space for target regions. Shi and Li~\cite{shi2021convexity} introduced an alternative level set-based active contour model, in which the curvature values of the iso-contours of the level set function are leveraged to establish a convex variant of the curve evolution flow. 
An important shortcoming  for these models is that only the sign of the curvature is utilized for segmentation, while its magnitude is ignored. 
Bae~\emph{et~al.}~\cite{bae2017augmented} introduced a variant of Euler-Mumford elastica model, where the convexity constraint is implicitly  taken as the regularizer of an energy, measured using the absolute curvature. Unfortunately, finding convex contours by this model heavily relies on the minimization of the energy, thus requiring a demanding  numerical scheme. 

\subsection{Geodesic Models}
The snakes model~\cite{kass1988snakes} is referred to as one of the earliest variational models  leveraging continuous curves to  extract image boundary features. However, the sensitivity to local minima of the snakes energy functional and the difficulty in finding suitable numerical solutions prevent this model from practical applications.  In order to address these shortcomings, Cohen and Kimmel~\cite{cohen1997global} proposed an elegant minimal path model, or geodesic model, to globally minimize a weighted curve length in an eikonal equation framework. This original geodesic model has inspired a great variety of relevant approaches, due to its significant advantages in both global optimality and efficient numerical solvers. Among them, many  geodesic models have contributed to develop various local geodesic metrics~\cite{melonakos2008finsler,peyre2010geodesic}, which extend globally optimal paths to accommodate various scenarios of image analysis. For instances, the curvature-penalized models introduced in~\cite{chen2017global,duits2018optimal,mirebeau2018fast} took into account an idea of orientation-lifting to address the computation problem of geodesic curves with curvature penalization. Using a suitable relaxation scheme, the geodesic distances involving curvature penalization can be efficiently estimated through the Hamiltonian fast marching method~\cite{mirebeau2018fast,mirebeau2019hamiltonian}.

In the context of image segmentation, the basic objective for geodesic-based methods is to find simple closed curves as the descriptor for boundaries of interest. Specifically, image gradients are taken as crucial features for computing geodesic curves, as considered in~\cite{appleton2005globally,appia2011active,mille2015combination}. Moreover, Chen~\emph{et al.}~\cite{chen2016finsler} introduced a region-based Randers geodesic model for computing  closed geodesic curves which can encode regional homogeneity features, thus build the connection between the eikonal equation-based geodesic models and the region-based active contours. In~\cite{chen2021Geodesic}, the authors exploited the curvature-penalized geodesic curves for image segmentation, in which the region-based homogeneity features were implicitly encoded in the geodesic metrics. 
 
Despite great advances, existing geodesic approaches only utilize the geometric length-based shape priors as the regularization term in tracking geodesic paths. In order to overcome this drawback, we introduce three new geodesic models, as an extension of a conference paper~\cite{chen2021elastica}, such that the curvature regularization, the region-based homogeneity measure and the convexity shape prior  can be integrated for the construction of metrics, which to our best knowledge is original. Finally, we show a comparison example in Fig.~\ref{fig_Demo} for geodesic paths respectively derived from two state-of-the-art geodesic models  and from the proposed geodesic model imposed with convexity shape prior. In this experiment, we aim to extract an ellipse shape. One can see that only the proposed geodesic model can find suitable segmentation contour, illustrating the advantages of imposing convexity shape prior in computing geodesic paths.

\subsection{Contributions and Paper Outline}
The convexity of a simple closed curve is defined by the sign of its curvature. By this definition, we propose new geodesic models, featuring convexity shape prior, under the eikonal equation framework. Basically, the proposed models  involve the establishment of metrics and the construction of search space for admissible geodesic paths. In summary, the contributions are threefold: 

\noindent$-~$\emph{Geodesic metrics encoding curvature restriction}. We introduce three new local geodesic metrics, such that the curvature of the physical projections of the orientation-lifted geodesic paths associated to these metrics have a constant sign. In addition, we also discuss the Hamiltonians of the proposed geodesic metrics, for which the discrete approximations, in terms of scalar nonnegative weights and offsets with integer components, are leveraged for finding the numerical solutions to the eikonal equations.

\noindent$-~$\emph{Construction of search space for geodesic paths}. We define a search space for the minimization of the weighted curve length which is measured via the proposed local geodesic metrics. This search space is in essence a set collecting all admissible orientation-lifted curves whose physical projection curve is simple closed and convex.

\noindent$-~$\emph{Applications in active contours}. We investigate the applications of the proposed geodesic models in active contour problem and in interactive image segmentation. As a consequence, the image segmentation procedure can blend the benefits from the convexity shape prior, curvature regularization and  efficient image features.

This paper is organized as follows. \Cref{sec_BG} gives the background on the curvature-penalized geodesic models  and their discretization schemes. Three geodesic models  imposed with a convexity shape prior are presented in \cref{sec_ConvexityTheorem,sec_SearchingSpace}. The numerical implementation is presented in \cref{sec_HFM}. In \cref{sec_ACAPP}, we show how to exploit the proposed geodesic models to the active contours and image segmentation.  The experimental results and the conclusion are respectively  presented in \cref{sec_Exp,sec_Conclusion}.

\section{Curvature Penalized Minimal Paths}
\label{sec_BG}
\textbf{Notations}.
Let  $\bM:=\Omega\times\bS^1$ be an orientation-lifted space, where $\Omega \subset \bR^2$ is a bounded domain, and  $\bS^1:=\bR/2\pi\bZ$ can be identified with $[0,2\pi[$ equipped with a periodic boundary condition. A point $\fx=(x,\theta)$ is a pair comprised of a \emph{physical} position $x\in \Omega$ and an \emph{angular} coordinate $\theta\in \bS^1$. 
The tangent space to $\bM$ is represented by $\bE := \bR^2\times\bR$, at any base point $\fx$, and its elements are denoted as $\dot\fx=(\dx,\dot\theta)$. 
In addition, we denote by $a_+ := \max \{0,a\}$ the positive part of a real number $a \in \bR$, and likewise $a_- := \max \{0,-a\}$. 
Finally, the conventions $0\times\infty=0$ and $a_+^2 := (a_+)^2$ are adopted in the remaining of this paper.

\subsection{Orientation lifting for curvature representation}
\label{sec_OL}
The proposed convexity-constrained geodesic models are obtained as variants of the classical curvature-penalized models~\cite{chen2017global,duits2018optimal,mirebeau2018fast}.
Their common foundation is to evaluate curvature using an orientation lifting~\cite{chambolle2019total}.
Consider a smooth curve $\gamma : [0,1] \to \Omega$, with non-vanishing velocity\footnote{The non-vanishing velocity assumption is implicit in the sequel.}. Then there exists a unique function $\eta : [0,1] \to \bS^1$ obeying for all $\varrho \in [0,1]$ 
\begin{equation}
\label{eq_TurningAngle}
\dot{\gamma}(\varrho)=\dfn_{\eta(\varrho)}\|\dot\gamma(\varrho)\|,
\end{equation}
where $\dfnt=(\cos\theta, \sin\theta)$ denotes the unit vector of angle $\theta$ w.r.t.\ the horizontal axis, and $\dot\gamma$ is the first-order derivative of $\gamma$. Thus,  $\eta(\varrho)$ encodes the tangent direction at $\gamma(\varrho)$.
By \eqref{eq_TurningAngle}, we define the orientation-lifted curve 
\begin{equation}
\label{eq_OL}
\Gamma:=(\gamma,\eta):\varrho\in[0,1]\mapsto\Gamma(\varrho)\in\bM, 
\end{equation}
whose first-order derivative reads $\dot\Gamma(\varrho)=(\dot\gamma(\varrho),\dot\eta(\varrho))\in\bE$. Conversely, we refer to $\gamma$ as the \emph{physical projection}, and $\eta$ as the \emph{angular component}, of the orientation lifted curve $\Gamma$.  
The curvature $\kappa:[0,1]\to\bR$ of the planar curve $\gamma$ is obtained as 
\begin{equation}
\label{eq_Curvature}
\kappa(\varrho):=\dot\eta(\varrho)/\|\dot\gamma(\varrho)\|. 
\end{equation}

\subsection{Curvature-penalized Geodesic Models}
\label{subsec_CurvatureGeodesic}
In this paper, we consider three curvature-penalized  minimal path models: the Reeds-Sheep forward (RSF) model~\cite{duits2018optimal}, the Dubins car model~\cite{mirebeau2018fast} and the Euler-Mumford (EM) elastica model~\cite{chen2017global}. 
The energy functionals defining these models involve a scalar-valued curvature penalty function $\cC:\bR\to]0,\infty]$, described in \cref{sec_ConvexityTheorem}. For a smooth curve $\gamma:[0,1]\to\Omega$,
with tangent direction $\eta$ and curvature $\kappa$, see Eqs.~\eqref{eq_TurningAngle} and \eqref{eq_Curvature}, the energy reads
\begin{equation}
\label{eq_SecOrderLength}
\int_0^1 \psi(\gamma(\varrho),\eta(\varrho))\,\cC(\beta\kappa(\varrho))\|\dot\gamma(\varrho)\|d\varrho,
\end{equation}
where $\psi:\bM\to\bR^+$ is a user-defined cost function, derived in this paper from the image data, see Section~\ref{sec_ACAPP}. 
The parameter $\beta\in\bR^+$ has the dimension of a radius of curvature, and modulates the strength of the curvature penalty. 

The curvature-penalized length~\eqref{eq_SecOrderLength} involves second-order derivatives of the curve $\gamma$, implicitly through  $\cC(\beta\kappa)$, and is thus not directly amenable to global optimization via the eikonal equation framework. 
Using the orientation lifting~\eqref{eq_OL} one can however express curvature as a ratio of first-order derivatives~\eqref{eq_Curvature}, which motivates the following equivalent definition of energy
\begin{equation}
\label{eq_FirstOrderLength}
\cL(\Gamma):=\int_0^1 \psi(\Gamma(\varrho)) \, \cF(\Gamma(\varrho),\dot\Gamma(\varrho))\, d\varrho,
\end{equation}
where $\cF:\bM\times\bE\to[0,\infty]$ is an orientation-lifted Finsler metric defined for any point $\fx=(x,\theta)\in\bM$ and any vector  $\dot\fx=(\dx,\dot\theta)\in\bE$. 
The geodesic metric can be expressed in terms of the curvature penalty function $\cC$ and modulation parameter $\beta$~\cite{mirebeau2018fast}
\begin{equation}
\label{eq_GeneralMetric}
\cF(\fx,\dot\fx)=
\begin{cases}
\cC(\beta\dot\theta/\|\dot x\|)\|\dot x\|,&\text{if~}\dx=\dfnt\|\dx\|,\\
\infty,&\text{otherwise},
\end{cases}	
\end{equation}
if $\dot x\neq 0$. (The lower semi-continuous limit is used if $\dot x=0$.) 

The equivalence between the functionals~\eqref{eq_SecOrderLength} and~\eqref{eq_FirstOrderLength} follows from the expression~\eqref{eq_Curvature} of the curvature $\kappa$. Let $\Lip([0,1],\bM)$ be the collection of all the orientation-lifted curves $\Gamma:[0,1]\to\bM$ with Lipchitz continuity. In order to compute the geodesic path from a source point $\fp\in \bM$ to a target point $\fx\in \bM$, we first define a geodesic distance map $\cU_\fp:\bM\to[0,\infty)$, also known as the minimal action map, as follows
\begin{equation}
\label{eq_GeoDistMap}
\cU_{\fp}(\fx)=\inf_{\Gamma\in\Lip([0,1],\bM)}\,\Big\{\cL(\Gamma);\, \Gamma(0)=\fp,\Gamma(1)=\fx\Big\}.
\end{equation}
As in~~\cite{sethian2003ordered,mirebeau2018fast}, this distance map is the unique viscosity solution to a generalized eikonal equation, or a static Hamiltonian-Jacobi equation, based on the Hamiltonian $\cH$ of the metric $\cF$:
\begin{equation}
\label{eq_HamiltonEikonal}	
\cH_\fx(d\cU_\fp(\fx))=\frac{1}{2}\psi(\fx)^2,\quad \forall\fx\in\bM\backslash\{\fp\},
\end{equation}
with $\cU_\fp(\fp)=0$, and with outflow boundary condition on $\partial \bM$, where $d\cU_\fp$ stands for the differential of the geodesic distance map $\cU_\fp$. The Hamiltonian $\cH$ is defined from the metric $\cF$ by Legendre-Fenchel duality, as follows
\begin{equation}
\label{eq_Hamiltonian}
\cH_\fx(\hat\fx):= \sup_{\dot \fx\in \bE} \left\{\<\hat\fx,\dot \fx\> -\frac{1}{2} \cF(\fx,\dot \fx)^2\right\}
\end{equation}
for any point $\fx = (x,\theta)\in \bM$ and any co-tangent vector $\hat\fx = (\hat{x},\hat\theta) \in \bR^2 \times \bR$. Let us emphasize that the curvature penalty $\cC$, the metric $\cF$, and the Hamiltonian $\cH$ have simple and explicit expressions for the models of interest,  presented \cref{sec_ConvexityTheorem}.

Once the geodesic distance map $\cU_\fp$ is known, 
a geodesic path $\cG$ from the source point $\fp$ to an arbitrary target point $\fx \in \bM$ can be backtracked by solving a gradient descent-like ODE backwards in time, see ~\cite{mirebeau2019hamiltonian} for a discussion of suitable numerical methods. Specifically, denoting by $T=\cU_{\fp}(\fx)$ the arrival time, one sets $\cG(T) = \fx$, and 
\begin{equation}
\label{eq_GeoODE}
\cG^\prime(\varrho) = \fV(\cG(\varrho)),\ \forall \varrho \in ]0,T],
\end{equation}
where the vector field $\fV$, referred to as geodesic flow, is defined from the minimal action map $\cU_\fp$ as follows
\begin{equation}
\label{eq_SchemeV}
\fV(\fx) = d\cH_\fx(d\,\cU_{\fp}(\fx)).
\end{equation}

In the remaining of this paper, we remove the dependency on the points $\fp$ of the minimal action map $\cU_\fp$.  

\subsection{Discretization of the eikonal equation} 
\label{subsec_HFM}

We describe in this section the construction of a finite difference scheme approximating the generalized eikonal equation~\eqref{eq_HamiltonEikonal}, and the geodesic flow \eqref{eq_SchemeV}. 
These numerical methods eventually allow to compute paths globally minimizing the curvature-penalized energy \eqref{eq_SecOrderLength}, as in~\cref{subsec_CurvatureGeodesic}.
Our approach follows the framework of the Hamiltonian fast marching (HFM) method~\cite{mirebeau2018fast,mirebeau2019riemannian,mirebeau2019hamiltonian}.

The HFM method takes its name from a specific representation or approximation of the Hamiltonian~\eqref{eq_Hamiltonian} involved in the eikonal equation~\eqref{eq_HamiltonEikonal}. For the models considered in this paper, this reads 
\begin{equation}
\label{eq_SchemeApprox}
	2 \cH_\fx(\hat \fx) \approx \max_{1 \leq k \leq K} \sum_{1 \leq i \leq I} \rho_{ik}(\fx) \< \hat \fx,\dfe_{ik}\>_+^2,
\end{equation}
for any point $\fx\in \bM$ and co-tangent vector $\hat \fx$. The choice of the integers $I,K$, of the \emph{non-negative} weights $\rho_{ik}(\fx)\geq 0$, and of the offsets with \emph{integer coordinates} $\dfe_{ik} \in \bZ^3$, constitutes the main originality of the HFM method~\cite{mirebeau2018fast,mirebeau2019hamiltonian} and is discussed in detail below and in \cref{sec_ConvexityTheorem}. The offsets often depend on the base point, $\dfe_{ik} = \dfe_{ik}(\fx)$, like the weights $\rho_{ik} = \rho_{ik}(\fx)$, but this is omitted in Eq.~\eqref{eq_SchemeApprox} and similar formulas for readability.

\subsubsection{The finite difference scheme}
The HFM method expects the domain $\bM=\Omega\times\bS^1$ to be discretized on a Cartesian grid 
\begin{equation}
\label{eqdef_grid}
	\bM_h:=(\Omega \cap h\bZ^2)\times(h\bZ/ 2\pi\bZ),
\end{equation} 
where $h=2\pi/N_{\theta}$ is the grid scale with $N_{\theta}$ being the number of discrete orientations: $h\bZ/ 2\pi\bZ = \{0,h,2h,\cdots,(N_\theta-1)h\}$.

Let $\cU : \bM \to \bR$ be a smooth function, let $\fx\in \bM_h$ and let $\dfe \in \bZ^3$. Then one has the first-order approximation 
\begin{equation}
	\label{eq_SchemeAsy}
\<d\cU(\fx),\dfe\>_+^2 =  \left(\frac{\cU(\fx) - \cU(\fx- h\dfe)}{h}\right)_+^2 + \cO(h),
\end{equation}
which only involves values of $\cU$ at the grid points $\fx$ and $\fx-h\dfe$, and is thus suitable for defining a finite difference scheme on $\bM_h$. If $\fx-h\dfe$ falls outside $\bM_h$, or if the segment $[\fx,\fx-h\dfe]$ intersects an obstacle introduced in the domain as in~\cref{subsec_ExtendCG} below, then we set $\cU(\fx- h\dfe)=\infty$ in Eq.~\eqref{eq_SchemeAsy}. This convention implements outflow boundary conditions.

Inserting Eq.~\eqref{eq_SchemeAsy} into Eq.~\eqref{eq_SchemeApprox} we obtain a finite differences approximation of $\cH_\fx(d \cU(\fx))$, with first-order $\cO(h)$ error w.r.t.~the grid scale. This yields the following discretization of the eikonal equation \eqref{eq_HamiltonEikonal}: find $u : \bM_h \to \bR$ such that 
\begin{equation}
\label{eq_SchemeMax}
	\max_{1 \leq k \leq K} \sum_{1 \leq i \leq I} \rho_{ik}(\fx) \big(u(\fx)-u(\fx-h\dfe_{ik})\big)_+^2 = h^2 \psi(\fx)^2
\end{equation}
for all $\fx \in \bM_h \sm \{\fp\}$, and $u(\fp)=0$. The specific form of this numerical scheme allows to solve it very efficiently, see \cref{sec_HFM}. The stencil of the scheme is defined as 
\begin{equation}
\label{eqdef:stencil}
	\cS(\fx) := \{\fy_{ik} := \fx-h \dfe_{ik}; 1 \leq k \leq K, 1 \leq i \leq I\}.
\end{equation}

\subsubsection{Approximation of a directional first-order derivative}
For the purposes of this paper, we recall one result of the HFM framework \cite[Proposition 1.1]{mirebeau2018fast}, which allows in \cref{sec_ConvexityTheorem} to approximate the Hamiltonians of the models considered in this paper in the desired form \eqref{eq_SchemeApprox}. Specifically, given a vector $\dfv\in\bE\cong \bR^3$ and a relaxation parameter $\ve>0$, one has for all co-vectors $\hat{\fx}$ :
\begin{equation}
\label{eq_HamiltonianAsym}
	\<\hat\fx,\dfv\>_+^2 = 
	\sum_{1 \leq j \leq J} 
	\rho^{\ve}_{j}(\dfv)\<\hat \fx, \dfe_{j}\>^2_+ 
	+\|\hat\fx\|^2\cO(\ve^2),	
\end{equation}
where $J=6$, 
the weight $\rho^{\ve}_{j}(\dfv)\geq 0$ is non-negative, and 
the offset 
$\dfe_j = \dfe_j^\ve(\dfv) \in \bZ^3$ has integer components, for all $1 \leq j \leq J$,
consistently with Eq.~\eqref{eq_SchemeApprox}. 
The construction of these weights and offsets is non-trivial and involves a tool from low-dimensional lattice geometry known as Selling's decomposition of positive quadratic forms \cite{mirebeau2018fast}. 
It has the following purpose: when $\hat \fx = d\cU(\fx)$ is the differential of a smooth function, then using \cref{eq_HamiltonianAsym,eq_SchemeAsy} and recalling that the offsets have integer components, one can approximate the directional derivative $\<d\cU(\fx),\dfv\>_+^2$ using finite differences whose structure is compatible with the efficient fast marching numerical method, see \cref{subsec:solving_distance} below. 

An analogous result holds in two dimensions: given $\dot v\in \bR^2$, $\ve>0$, one has for all $\hat x \in \bR^2$
\begin{equation}
\label{eq_HamiltonianAsym2}
	\<\hat x,\dot v\>_+^2 = 
	\sum_{1 \leq j \leq J'} 
	\rho^\ve_j(\dot v)\<\hat x, \dot e_j\>^2_+ 
	+\|\hat x\|^2\cO(\ve^2),	
\end{equation}
with $J'=3$ terms, and where $\rho^\ve_j(\dot v)\geq 0$ and $\dot e_j = \dot e_j^\ve(\dot v)\in \bZ^2$ for all $1 \leq j \leq J'$.\\

\subsubsection{The geodesic flow vector field $\fV$}
By differentiating Eq.~\eqref{eq_SchemeApprox}, we obtain 
\begin{equation}
	d \cH_\fx(\hat \fx) \approx \sum_{1 \leq i \leq I} \rho_{ik_*}(\fx) \<\hat \fx, \dfe_{ik_*}\>_+ \dfe_{ik_*},
\end{equation}
where $1 \leq k_* \leq K$ is the index for which the maximum (\ref{eq_SchemeApprox}, right) is attained, by the envelope theorem. Inserting the finite differences \eqref{eq_SchemeAsy} we can approximate the geodesic flow vector \eqref{eq_SchemeV} 
\begin{equation}
\label{eq_SchemeMaxV}
	\fV(\fx) \approx h^{-1} \sum_{1 \leq i \leq I} \rho_{ik_*}(\fx) \big(u(\fx) - u(\fx-\dfe_{i k_*})\big)_+ \dfe_{ik_*},
\end{equation}
where $u : \bM_h \to \bR$ denotes the numerical scheme solution \eqref{eq_SchemeMax}.

\section{Convexity-constrained Geodesic Models}
\label{sec_ConvexityTheorem}
We introduce three new geodesic models which feature both a convexity shape prior and a penalization of curvature. 
More precisely, these models impose a constraint on the \emph{sign} of the curvature, as motivated by the following definition.
\begin{definition}[\textbf{Simple Closed Convex Curves}]
\label{def_Convex} 
A simple closed planar curve $\gamma$, smooth and parametrized in a counter-clockwise order, is said convex iff its curvature $\kappa$ in Eq.~\eqref{eq_Curvature} is non-negative. 
\end{definition}	

In the remaining of this section, we thus design curvature penalty functions $\kC$, which partly coincide with the penalty $\cC$ of the RSF, Dubins and EM elastica models, but also enforce the non-negativity of the curvature. 
We then derive the corresponding geodesic metrics $\kF$, Hamiltonians $\kH$ and their discretizations in the HFM framework \eqref{eq_SchemeApprox}. 
In the text, the convexity-constrained objects $\kC$, $\kF$, and $\kH$ are distinguished from their classical counterparts $\cC$, $\cF$, and $\cH$ by the choice of font. 
Further discussion of the search space for geodesic paths, ensuring that the physical projection is simple and closed, 
is postponed to Section~\ref{sec_SearchingSpace}.

A curvature penalty $\kC : \bR \to ]0,\infty]$, enforcing convexity by imposing a non-negative curvature, should obey by design
\begin{equation}
\label{eq_ConvexCurvature}
	\kC(\beta \kappa) = \infty, \text{~if~} \beta\kappa<0.
\end{equation}
The corresponding geodesic metric $\kF : \bM \times \bE \to [0,\infty]$, defined by Eq.~\eqref{eq_GeneralMetric}, thus satisfies 
\begin{equation}
\label{eq_GenericCharact}
\kF(\fx,\dot\fx)=\infty, ~\text{if~} \dot\theta<0 \text{~or~} \dx\neq \dfnt\|\dx\|.
\end{equation} 
for any point  $\fx=(x,\theta)\in\bM$ and any vector $\dot\fx=(\dx,\dot\theta)\in\bE$. 
The associated energy $\kL$ of a smooth curve $\Gamma : [0,1] \to \bM$ is 
\begin{equation}
\label{eq_LengthConvexity}	
\kL(\Gamma):=\int_0^1\psi(\Gamma(\varrho))\,\kF(\Gamma(\varrho),\dot\Gamma(\varrho))d\varrho.
\end{equation}
In view of the properties of the metric \eqref{eq_GenericCharact}, any curve $\Gamma = (\gamma, \eta)$ with finite energy must obey the orientation-lifting relation \eqref{eq_TurningAngle}, and have a non-decreasing angular component $\dot \eta \geq 0$. Finally, we infer from Eqs.~\eqref{eq_Hamiltonian} and \eqref{eq_GenericCharact} that the Hamiltonian $\kH$ of a convexity-constrained model obeys
\begin{equation}
\label{eq_ConvexH_props}
	\kH((\pm \dfnt^\perp,0)) = \kH((-\dfnt,0)) = \kH((\mathbf 0,-1)) = 0,
\end{equation} 
where $\dfnt^\perp$ denotes the rotation of $\dfnt$ by $\pi/2$.

\noindent\emph{Notation:} In the rest of this section, we fix a base point $\fx = (x,\theta) \in \bM$, a vector $\dot \fx = (\dot x,\dot \theta) \in \bE$, and a co-vector $\hat \fx = (\hat x,\hat \theta)$.

\subsection{Convexity-constrained RSF Model}
\label{subsec_ConvexRSF}
In the \emph{classical} RSF model, the curvature penalty function is defined as 
\begin{equation}
\label{eq_CRS}
	\cC^\rRS(\beta\kappa)=\sqrt{1+(\beta\kappa)^2}.
\end{equation}
The RSF geodesic metric $\cF^\rRS$ is obtained by incorporating $\cC^\rRS$ in the general expression~\eqref{eq_GeneralMetric}, and thus reads~\cite{duits2018optimal}
\begin{equation}
\label{eq_RSFMetric}
\cF^{\rRS}(\fx,\dot\fx)=
\sqrt{\|\dx\|^2+(\beta\dot\theta)^2} \quad \text{if~}\dx=\dfnt\|\dx\|,
\end{equation}
and $\cF(\fx,\dot \fx)=\infty$ otherwise. 
The Hamiltonian $\cH^\rRS$, obtained by specializing \eqref{eq_Hamiltonian} to the RSF metric $\cF^\rRS$, admits the following closed form expression~\cite{mirebeau2018fast}
\begin{equation}
\label{eq_OriginalHamiltonian}
2 \cH^\rRS_\fx(\hat\fx)= \< \hat{x},\dfnt\>_+^2+(\hat\theta/\beta)^2.
\end{equation}

In this section, we consider a \emph{convexity-constrained} RSF model defined, in view of~\eqref{eq_ConvexCurvature}, via the curvature penalty function 
\begin{equation}
	\kC^\rRS(\beta \kappa) = \sqrt{1+(\beta \kappa)^2} \quad \text{if } \beta \kappa \geq 0,
\end{equation}
and $\kC(\beta \kappa) = \infty$ otherwise. 
The corresponding convexity-constrained RSF metric reads, by Eq.~\eqref{eq_GeneralMetric}
\begin{equation*}
\kF^\rRS(\fx,\dot\fx)=\sqrt{\|\dx\|^2+(\beta\dot\theta)^2}
\quad \text{~if~} \dot\theta\geq 0\text{~and~}\dx=\dfnt\|\dx\|,
\end{equation*}
and $\kF^\rRS(\fx,\dot\fx)=\infty$ otherwise.

\begin{proposition}
The Hamiltonian of the convexity-constrained RSF metric $\kF^\rRS$ reads 
\begin{equation}
\label{eq_RSFConvexH}
2\kH^\rRS_\fx(\hat\fx)=\< \hat{x},\dfnt\>_+^2+(\hat\theta/\beta)_+^2,
\end{equation}
for any point $\fx=(x,\theta)\in\bM$ and 
co-vector $\hat\fx=(\hat{x},\hat{\theta})\in\bR^3$.
\end{proposition}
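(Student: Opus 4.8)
The plan is to compute the Legendre--Fenchel transform \eqref{eq_Hamiltonian} directly for the metric $\kF^\rRS$, exploiting the fact that this metric is finite only on a very restricted cone of tangent vectors. First I would observe that $\kF^\rRS(\fx,\dot\fx)<\infty$ forces $\dot x = \dfnt\|\dot x\|$ and $\dot\theta\geq 0$; writing $\dot x = r\,\dfnt$ with $r\geq 0$ and keeping $\dot\theta \geq 0$, the supremum defining $\kH^\rRS$ becomes a two-variable optimization
\begin{equation*}
2\kH^\rRS_\fx(\hat\fx) = \sup_{r\geq 0,\ \dot\theta\geq 0}\Big\{ 2\big(r\<\hat x,\dfnt\> + \hat\theta\,\dot\theta\big) - \big(r^2 + (\beta\dot\theta)^2\big)\Big\}.
\end{equation*}
I would then note that this objective is separable in $r$ and $\dot\theta$, so the supremum splits as a sum of two independent one-dimensional problems.

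Next I would solve each scalar problem. For the $r$-variable, $\sup_{r\geq 0}\{2r\<\hat x,\dfnt\> - r^2\}$ is a concave parabola in $r$ with unconstrained maximizer $r = \<\hat x,\dfnt\>$; if $\<\hat x,\dfnt\>\geq 0$ this is admissible and the value is $\<\hat x,\dfnt\>^2$, while if $\<\hat x,\dfnt\><0$ the constrained maximum is at $r=0$ with value $0$. In both cases the value equals $\<\hat x,\dfnt\>_+^2$. Identically, $\sup_{\dot\theta\geq 0}\{2\hat\theta\,\dot\theta - (\beta\dot\theta)^2\}$ has unconstrained maximizer $\dot\theta = \hat\theta/\beta^2$, admissible iff $\hat\theta\geq 0$, yielding value $\hat\theta^2/\beta^2 = (\hat\theta/\beta)^2$ when $\hat\theta\geq 0$ and $0$ otherwise; this is $(\hat\theta/\beta)_+^2$. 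Summing gives exactly \eqref{eq_RSFConvexH}.

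I would also address the degenerate case $\dot x = 0$, where \eqref{eq_GeneralMetric} prescribes the lower semi-continuous limit: taking $r\to 0$ in the family above shows $\kF^\rRS((x,\theta),(0,\dot\theta)) = \beta|\dot\theta|$ if $\dot\theta\geq 0$ and $+\infty$ if $\dot\theta<0$, which is already covered by the parametrization $r\geq 0$, so no separate treatment is needed; the supremum is unchanged. It is worth remarking for the reader that \eqref{eq_RSFConvexH} is consistent with the general property \eqref{eq_ConvexH_props}: substituting the co-vectors $(\pm\dfnt^\perp,0)$, $(-\dfnt,0)$, $(\mathbf 0,-1)$ annihilates both positive parts and returns $0$, and that it degenerates to the classical $2\cH^\rRS_\fx(\hat\fx)$ in \eqref{eq_OriginalHamiltonian} precisely by dropping the positive part on the $\hat\theta$ term, mirroring the removal of the sign constraint $\dot\theta\geq0$.

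The computation is elementary, so there is no real obstacle; the only point requiring a little care is the bookkeeping of the constrained one-dimensional maxima — making sure that the constraint $r\geq 0$ (resp.\ $\dot\theta\geq 0$) is what converts the plain square into its positive part, rather than, say, accidentally clipping at the wrong endpoint. A secondary subtlety is justifying that the separable supremum over the cone $\{r\geq 0\}\times\{\dot\theta\geq 0\}$ genuinely equals the supremum over all of $\bE$ in \eqref{eq_Hamiltonian}: this follows because $\kF^\rRS = +\infty$ off that cone, so those directions contribute $-\infty$ to the Legendre transform and can be discarded.
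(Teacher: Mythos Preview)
Your proof is correct. The paper takes a slightly different route: rather than parametrizing the feasible cone and optimizing the two scalar variables separately, it rewrites $\kF^\rRS(\fx,\dot\fx)^2$ as a generalized quadratic form in the orthonormal basis $[(\dfnt,0),(\dfnt^\perp,0),(\mathbf 0,1)]$, namely
\[
\< \dx,\dfnt\>_+^2+\infty\,\< \dx,\dfnt\>_-^2+\beta^2\dot\theta_+^2+\infty\,\dot\theta_-^2+\infty\,\<\dx,\dfnt^\perp\>^2
\]
(with the convention $0\times\infty=0$), and then invokes general properties of Legendre--Fenchel duality for such separable one-sided quadratic penalties to read off the Hamiltonian by inverting coefficients. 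Your direct optimization is more self-contained and makes transparent exactly how each half-line constraint $r\geq 0$, $\dot\theta\geq 0$ turns a plain square into a positive part; the paper's reformulation is terser and extends more mechanically to other metrics that are diagonal in the same basis, but it asks the reader to accept the duality pattern for forms with $\infty$ weights without spelling it out.
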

\begin{proof}
Observing that $\dot\theta^2=\dot\theta^2_++\dot\theta^2_-$, we reformulate the convexity-constrained RSF metric $\kF^\rRS$ as follows	
\begin{align*}
&\kF^\rRS(\fx,\dot\fx)^2=\nonumber\\
&\< \dx,\dfnt\>_+^2+\infty\< \dx,\dfnt\>_-^2+\beta^2\dot\theta_+^2
+\infty\dot\theta_-^2+
\infty\<\dx,\dfnt^\perp\>^2,
\end{align*}
with the convention $0\times\infty=0$. 
Noting that $[(\dfnt,0),\allowbreak (\dfnt^\perp,0),\allowbreak (\mathbf 0,1)]$ is an orthonormal basis of $\bE\cong \bR^2 \times \bR$, and by general properties of Legendre-Fenchel duality \eqref{eq_Hamiltonian} for quadratic functions, we obtain the expression~\eqref{eq_RSFConvexH} for the Hamiltonian~$\kH^\rRS$. 
\end{proof}

We obtain using Eq.~\eqref{eq_HamiltonianAsym2} an approximate decomposition of the convexity-constrained RSF Hamiltonian $\kH^{\rRS}$, with a form that fits the HFM framework, and an $\cO(\ve^2)$ error,
\begin{equation*}	
	2\kH^{\rRS}_\fx(\hat\fx) \approx 
	\sum_{1\leq j\leq J^\prime}\rho^\ve_j(\dfnt)\<\hat{x},\dot e_j\>_+^2
	+(\hat\theta/\beta)_+^2.
\end{equation*}
Recall that $J'=3$, that $\rho^\ve_j(\dfnt) \geq 0$ is a non-negative weight, and that $\dot e_j= \dot e_j^\ve(\dfnt) \in \bZ^2$ is a two dimensional offset with integer components, for all $1 \leq j \leq J'$. 
For comparison, the original RSF Hamiltonian~\eqref{eq_OriginalHamiltonian} admits a similar decomposition, used in \cite{mirebeau2018fast}, 
except for the last term which reads $(\hat\theta/\beta)^2$.

Using finite differences \eqref{eq_SchemeAsy} we discretize the operator $\kH^\rRS_\fx(d \cU(\fx))$ of the eikonal equation \eqref{eq_HamiltonEikonal}, with consistency error $\cO(h+\ve^2)$. The scheme involves a stencil $\cS:=\cS^{\rRS}$ consisting of $I = J'+1 = 4$ neighbor points of $\fx$ on the grid $\bM_h$. Namely
\begin{equation}
\cS^{\rRS}(\fx):=\{\fy_i=\fx-h\dfe_i;\,1\leq i\leq I\},	
\end{equation}
 where $\dfe_i\in\bZ^3$ is defined as $\dfe_i=(\dot e_i,0)$ for $1\leq i\leq I-1$, and $\dfe_i=(\mathbf 0,1)$ for $i=I$. 

\subsection{Convex-constrained Dubins Car Model}
\label{subsec_ConvexDubins}

In the \emph{classical} Dubins car model, the curvature penalty function $\cC:=\cC^\rmD$ is obtained by thresholding the curvature $\kappa$, so that 
\begin{equation}
\label{eq_CDubins}
	\cC^\rmD(\beta\kappa) = 1 \quad \text{if } |\beta\kappa|\leq 1, 
\end{equation}
and $\cC^\rmD(\beta\kappa)=\infty$ otherwise. 
The Dubins geodesic metric $\cF^\rmD$ is obtained by incorporating the penalty function $\cC^\rmD$ in the general expression~\eqref{eq_GeneralMetric}, and thus reads
\begin{equation}
\cF^{\rmD}(\fx,\dot\fx)=\|\dx\|, \quad
\text{if~}\dx=\|\dx\|\dfnt\text{~and~}|\beta\dot\theta|\leq\|\dx\|,
\end{equation}
and $\cF^{\rmD}(\fx,\dot\fx)=\infty$ otherwise.
The Hamiltonian $\cH^\rmD$ of the Dubins metric $\cF^\rmD$, defined by Eq.~\eqref{eq_Hamiltonian}, admits the following closed form expression~\cite{mirebeau2018fast}
\begin{equation}
\label{eq_DubinsH}
2\cH^\rmD_\fx(\hat\fx)	=\max\left\{\<\hat\fx, \dfq^\rmD_+\>_+^2,\ \<\hat\fx, \dfq^\rmD_-\>_+^2\right\},
\end{equation}
where $\dfq^\rmD_+:=(\dfnt,1/\beta)$ and $\dfq^\rmD_-:=(\dfnt,-1/\beta)$. 
The two vectors $\dfq^\rmD_+,\dfq^\rmD_-\in \bE$ should be regarded as extremal controls, corresponding to a vehicle moving in circles of radius $\beta$,  respectively in a counter-clockwise and a clockwise manner.

In this section, we introduce a \emph{convexity-constrained} Dubins model defined, in view of \eqref{eq_ConvexCurvature}, by the curvature penalty function 
\begin{equation}
	\kC^\rmD(\beta \kappa) = 1 \quad \text{if } 0 \leq \beta \kappa \leq 1,
\end{equation}
and $\kC^\rmD(\beta \kappa) = \infty$ otherwise. 
The corresponding convexity-constrained Dubins metric reads, by \eqref{eq_GeneralMetric}
\begin{equation}
\kF^\rmD(\fx,\dot\fx)=\|\dx\|,~\text{if~}\dx=\|\dx\|\dfnt\text{~and~}0\leq \beta\dot\theta\leq\|\dx\|,
\end{equation}
and $\kF^\rmD(\fx,\dot\fx)=\infty$, otherwise.

\begin{proposition}
The convexity-constrained Dubins Hamiltonian, denoted by $\kH^\rmD$, reads 
\begin{equation}
\label{eq_ConvexDubinsH}
2\kH^{\rmD}_\fx(\hat\fx)=\max\Big\{\<\hat\fx, \dfq^{\rmD}_+\>_+^2,\<\hat{x}, \dfnt \>_+^2\Big\}.
\end{equation}
\begin{proof}
By definition~\eqref{eq_Hamiltonian}, and positive homogeneity of $\kF^\rmD$, one has
\begin{align}
\label{eq_DubinsHamiltonianOpt}
\kH^\rmD_\fx(\hat\fx)&= \sup_{\dot \fx\in \bE} \left\{\<\hat\fx,\dot \fx\> -{\ts\frac{1}{2}} \kF^\rmD(\fx,\dot \fx)^2\right\}\nonumber\\
&=\sup_{\dot \fx\in\bE} \left\{ {\ts\frac{1}{2}}\<\hat\fx,\dot \fx\>^2_+;\, \kF^\rmD(\fx,\dot\fx)\leq 1\right\}.
\end{align}
The convex optimization problem \eqref{eq_DubinsHamiltonianOpt} is posed on the set 
\begin{align*}
\cB^\rmD(\fx)&=\{\dot\fx\in\bE;\, \kF^\rmD(\fx,\dot\fx)\leq 1\}\\
&=\{(a\dfnt,b/\beta)\in\bE;\, 0\leq b\leq a\leq 1\}.
\end{align*}
The set $\cB^{\rmD}(\fx)$, referred to as the control set of the  metric $\kF^{\rmD}$, is a right-angled triangle, with vertices $(\mathbf{0},0)$, $\dfq^{\rmD}_+,(\dfnt,0) \in \bE \approx \bR^3$. 
Since the mapping $\dot \fx \mapsto \<\hat\fx,\dot \fx\>^2_+$ is convex, the maximal value is attained at one of the vertices of $\cB^{\rmD}(\fx)$, and therefore
\begin{equation*}
2\kH^{\rmD}_\fx(\hat\fx)=\max\{0,\<\hat\fx, \dfq^{\rmD}_+\>^2_+,\<\hat{\fx},(\dfnt,0) \>^2_+\}.
\end{equation*}
The announced result follows. Note that the Hamiltonian of the standard Dubins model \cref{eq_DubinsH} is computed using a similar argument in~\cite{mirebeau2018fast}.
\end{proof}
\end{proposition}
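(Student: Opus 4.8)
The plan is to compute $\kH^\rmD$ directly from the Legendre--Fenchel definition \eqref{eq_Hamiltonian}, exploiting that the convexity-constrained Dubins metric $\kF^\rmD$ is positively $1$-homogeneous in its vector argument. First I would perform the standard ray reduction: for each fixed direction $\dot\fx$, optimizing $t\mapsto t\<\hat\fx,\dot\fx\>-{\ts\frac{1}{2}}t^2\kF^\rmD(\fx,\dot\fx)^2$ over $t\ge0$ and then taking the supremum over directions yields
\[
\kH^\rmD_\fx(\hat\fx)=\sup_{\dot\fx\in\bE}\Big\{{\ts\frac{1}{2}}\<\hat\fx,\dot\fx\>_+^2\ :\ \kF^\rmD(\fx,\dot\fx)\le1\Big\}.
\]
Directions with $\kF^\rmD=\infty$ contribute $-\infty$ and are discarded; and since $\kF^\rmD(\fx,\dot\fx)=\|\dx\|$ forces, together with the support condition $0\le\beta\dot\theta\le\|\dx\|$, that $\kF^\rmD(\fx,\dot\fx)=0$ only at $\dot\fx=\mathbf 0$, the problem is a genuine maximization over the sublevel set $\cB^\rmD(\fx):=\{\dot\fx\in\bE:\kF^\rmD(\fx,\dot\fx)\le1\}$.

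Second I would make $\cB^\rmD(\fx)$ explicit. By the definition of $\kF^\rmD$ through \eqref{eq_GeneralMetric}, finiteness forces $\dx=\|\dx\|\dfnt$ and $0\le\beta\dot\theta\le\|\dx\|$, after which $\kF^\rmD(\fx,\dot\fx)\le1$ reads $\|\dx\|\le1$. Writing $a:=\|\dx\|$ and $b:=\beta\dot\theta$, one gets $\cB^\rmD(\fx)=\{(a\dfnt,b/\beta):0\le b\le a\le1\}$, which is the linear image of the planar triangle $\{(a,b):0\le b\le a\le1\}$ and hence a (nondegenerate) triangle in $\bE\cong\bR^3$ with vertices $\mathbf 0$, $(\dfnt,0)^\trans$ and $\dfq^\rmD_+=(\dfnt,1/\beta)^\trans$, corresponding to $(a,b)\in\{(0,0),(1,0),(1,1)\}$.

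Third, because $\dot\fx\mapsto\<\hat\fx,\dot\fx\>_+^2$ is convex — it is the square of the nonnegative convex function $\<\hat\fx,\cdot\>_+$ — its maximum over the compact convex set $\cB^\rmD(\fx)$ is attained at an extreme point, i.e.\ at one of the three vertices, exactly as in the analogous computation for the classical Dubins Hamiltonian in \cite{mirebeau2018fast}. Evaluating: the vertex $\mathbf 0$ gives $0$; the vertex $\dfq^\rmD_+$ gives $\<\hat\fx,\dfq^\rmD_+\>_+^2$; and the vertex $(\dfnt,0)^\trans$ gives $\<\hat\fx,(\dfnt,0)^\trans\>_+^2=\<\hat x,\dfnt\>_+^2$, since the angular coordinate $\hat\theta$ of the co-vector pairs with a vanishing last component. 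Taking the maximum and discarding the redundant $0$ term yields $2\kH^\rmD_\fx(\hat\fx)=\max\{\<\hat\fx,\dfq^\rmD_+\>_+^2,\<\hat x,\dfnt\>_+^2\}$, as claimed.

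The vertex evaluations and the ray optimization are routine; the step I expect to require the most care is justifying the reduction to the control-set formulation rigorously in the presence of the value $+\infty$ and of the degenerate direction $\dot x=0$, and checking that $\cB^\rmD(\fx)$ is indeed compact (closedness from lower semicontinuity of $\kF^\rmD$, boundedness from $0\le b\le a\le1$) so that the Bauer-type "maximum at an extreme point" argument applies. Once this is established, the remainder is the short convexity computation above.
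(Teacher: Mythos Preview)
Your proposal is correct and follows essentially the same route as the paper: reduce the Legendre--Fenchel transform to a supremum of $\frac12\<\hat\fx,\cdot\>_+^2$ over the control set $\cB^\rmD(\fx)$ via positive homogeneity, identify $\cB^\rmD(\fx)$ as the triangle with vertices $\mathbf 0$, $(\dfnt,0)$, $\dfq^\rmD_+$, and conclude by convexity that the maximum is attained at a vertex. Your additional remarks on compactness and the handling of $+\infty$ values are welcome hygiene but do not depart from the paper's argument.
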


The two vectors $\dfq^\rmD_+ = (\dfnt, 1/\beta)$ and $(\dfnt, 0)$, involved implicitly in Eq.~\eqref{eq_ConvexDubinsH}, should be regarded as extremal controls corresponding respectively to a vehicle moving in circles of radius $\beta$ counter-clockwise, or moving in a straight line.

We obtain using Eqs.~\eqref{eq_HamiltonianAsym} and~\eqref{eq_HamiltonianAsym2} an approximate decomposition of the convexity-constrained RSF Hamiltonian $\kH^\rmD$, with a form that fits the HFM framework, and an $\cO(\ve^2)$ error,
\begin{align}
	2\kH^{\rmD}_\fx(\hat\fx)
	\approx\max\Bigg\{
	\sum_{1\leq j \leq J}\rho_j^\ve(\dfq_+^{\rmD})\<\hat\fx,\dfe_j\>_+^2,&\nonumber\\
	\label{eq_ConvexDubinsDecomp}
	\sum_{1\leq j\leq J^\prime}\rho_j^\ve(\dfnt)\<\hat{x}, \dot e_j\>_+^2&\Bigg\}.
\end{align}
Recall that $J=6$ and $J'=3$, that $\rho_j^\ve(\dfq_+^{\rmD})$ and $\rho_j^\ve(\dfnt)$ are non-negative weights, and that $\dfe_j = \dfe_j^\ve(\dfq_+^D)\in \bZ^3$ and $\dot e_j = \dot e_j^\ve(\dfnt)\in \bZ^2$ are offsets with integer coordinates.

Using the finite differences \eqref{eq_SchemeAsy}, we discretize the operator $\kH^\rmD_\fx(d \cU(\fx))$ of the eikonal equation \eqref{eq_HamiltonEikonal}, with consistency error $\cO(h+\ve^2)$. The scheme involves a stencil $\cS:=\cS^\rmD$ consisting of $I = J+J' = 9$ neighbor points of $\fx$ on the grid $\bM_h$. Namely
\begin{equation}
\cS^\rmD(\fx):=\{\fy_i=\fx-h\dfe_i\in\bM_h;\, 1\leq i \leq I\},	
\end{equation}
where the offsets are defined by $\dfe_i=\dfe_i^\ve(\dfq_+^{\rmD})$ for $1\leq i\leq J$, and $\dfe_i=(\dot{e}_{i-J}^\ve(\dfnt),0)$ for $J+1 \leq i \leq I$. 
For comparison, the Hamiltonian associated to the original Dubins model  $\cH^\rmD$ \eqref{eq_DubinsH} admits an approximate decomposition similar to \eqref{eq_ConvexDubinsDecomp} but involving the vector $\dfq^\rmD_-$ instead of $\dfnt$, and a finite differences scheme based on a $J+J=12$ points stencil~\cite{mirebeau2018fast}.

\subsection{Convexity-constrained EM Elastica Model}
\label{subsec_ConvexGeo}
In the \emph{classical} EM elastica model, the curvature penalty function is defined as 
\begin{equation}
	\cC^\rEM(\beta\kappa)=1+(\beta\kappa)^2.
\end{equation}
This quadratic curvature penalty is intermediate between the quasi-linear penalty of the RSF model \eqref{eq_CRS}, and the hard threshold of the Dubins model \eqref{eq_CDubins}.  
As a result, the EM elastica model assigns a high cost to path sections with large curvature, in contrast to the Dubins model which forbids them, and to the RSF model which tolerates infinite curvature (angular paths), see \cite{mirebeau2018fast}. 
The elastica geodesic metric $\cF^\rEM$ reads, by \eqref{eq_GeneralMetric} 
\begin{equation}
\label{eq_FEMMetric}
\cF^\rEM(\fx,\dot\fx)=
\|\dx\|+\frac{(\beta\dot\theta)^2}{\|\dx\|}
\quad \text{if~} \dx=\dfnt\|\dx\|,
\end{equation}
with $\cF^\rEM(\fx,\mathbf 0) = 0$ 
and $\cF^\rEM(\fx,\dot\fx)=\infty$ otherwise.
The elastica Hamiltonian $\cH^\rEM$ is derived from the metric $\cF^\rEM$ by Legendre-Fenchel duality \eqref{eq_Hamiltonian}, and in \cite{mirebeau2018fast} it is shown equal to 
\begin{align}
\label{eq_HEM_Alg}
 2\cH_\fx^\rEM(\hat\fx)&=\frac{1}{4}\left(\<\hat{x}, \dfnt\> + \sqrt{\<\hat{x}, \dfnt\>^2 + (\hat\theta/\beta)^2}\,\right)^2\\
\label{eq_HEM_Int}
&= \int_{-\pi/2}^{\pi/2}  \big\<\dfq(\theta,\vp), \hat \fx\big\>_+^2 \cos \vp\, d\vp,
\end{align}
where we denoted
\begin{equation}
\label{eq_qEM}
	\dfq(\theta,\vp) := \textstyle{\frac{\sqrt 3}2}(\dfnt \cos \vp, \beta^{-1} \sin \vp) \in \bE \cong \bR^2 \times \bR.
\end{equation}
Somewhat curiously, the integral form Eq.~\eqref{eq_HEM_Int} is more suitable for the HFM solver framework than the algebraic form \eqref{eq_HEM_Alg} of the Hamiltonian. 
Indeed, using the Fejer quadrature rule\footnote{The integral \eqref{eq_HEM_Int} has a cosine weight over the interval $[-\pi/2,\pi/2]$, which is equivalent to a sine weight over $[0,\pi]$, thus suitable for the Fejer rule.} with $L$ points we obtain with an $\cO(1/L^2)$ error
\begin{equation*}
	2\cH^\rEM_\fx(\hat\fx)\approx
	\sum_{1 \leq l \leq L} w_l \<\hat\fx,\dfq(\theta,\vp_l)\>_+^2, 
\end{equation*}
with suitable weights $w_l\geq 0$, and with the quadrature nodes 
$\vp_l := (2l-L-1)\pi/(2L)$, for all $1 \leq l \leq L$.
From this point, the approximate decomposition~\eqref{eq_HamiltonianAsym} of each of the terms $\<\hat\fx,\dfq(\theta,\vp_l)\>_+^2$, yields with Eq.~\eqref{eq_SchemeAsy} a finite differences scheme with an $\cO(h+\ve^2+L^{-2})$ error and a stencil of $L J$ points~\cite{mirebeau2018fast}. In practice, we use $L=5$ which yields enough accuracy, and use the relaxation parameter $\ve=0.1$, whereas $J=6$ by definition.

In contrast with the RSF and Dubins models, we design the \emph{convexity-constrained} EM elastica model through a modification of Hamiltonian $\cH^\rEM$ \eqref{eq_HEM_Int}, rather than of the curvature penalty function $\cC^\rEM$.
More precisely, we let (notice the integral bounds)
\begin{equation}
\label{eq_ConvexHEM_Int}
2\kH_\fx^{\rEM}(\hat\fx) := \int_0^{\pi/2} \big\<\dfq(\theta,\vp), \hat \fx\big\>_+^2 \cos\vp\, d\vp.
\end{equation}	
The vectors $\dfq(\theta,\vp)$ in Eqs.~\eqref{eq_HEM_Int} and \eqref{eq_ConvexHEM_Int} should be regarded as controls. 
When $\vp \geq 0$ as in Eq.~\eqref{eq_ConvexHEM_Int}, the third component of $\dfq(\theta,\vp)$ is non-negative, see Eq.~\eqref{eq_qEM}, and this control thus corresponds to a vehicle rotating clockwise. Note also that $\kH^\rEM$ obeys Eq.~\eqref{eq_ConvexH_props}. In the following, we compute the metric $\kF^\rEM$ and curvature cost function $\kC^\rEM$ corresponding to $\kH^\rEM$, and show that they obey Eqs.~\eqref{eq_GenericCharact} and \eqref{eq_ConvexCurvature} as desired.

Before that, let us conclude the description of the numerical scheme for the convexity-constrained EM elastica model. We have by~\eqref{eq_HamiltonianAsym} the approximate decomposition, with $\cO(\ve^2+L^{-1})$ error
\begin{equation*}
	2\kH_\fx^{\rEM}(\hat\fx) \approx \sum_{1 \leq l \leq L} \tilde w_l \sum_{1 \leq j \leq J} \rho_{jl} \< \hat \fx, \dfe_{jl}\>_+^2,
\end{equation*}
where $\rho_{jl} = \rho_j^\ve(\dfq(\theta,\vp_l))$ and $\dfe_{jl} = \dfe_j^\ve(\dfq(\theta,\vp_l))$, and where
\begin{equation}
\label{eqdef:tw}
	\tilde w_l := 
	\begin{cases}
	w_l &\text{ if } \vp_l > 0,\\
	w_l/2 &\text{ if } \vp_l = 0,\\
	0 &\text{ else},
	\end{cases}
\end{equation}
is an adaptation of the Fejer rule for the integral \eqref{eq_ConvexHEM_Int} over the half domain $[0,\pi/2]$. Alternatively, another consistent approximation of Eq.~\eqref{eq_ConvexHEM_Int}, albeit with a larger consistency error, can be achieved by retaining the original Fejer weights $w_l$, but introducing $\tilde \rho_{jl} := \rho_{jl}$ if the third component of $\dfe_{jl}$ is non-negative, and $\tilde \rho_{jl} := 0$ otherwise%
\footnote{Indeed, note that $\dfe_{jl}$ is almost aligned with $\dfq(\theta,\vp_l)$, due to \eqref{eq_HamiltonianAsym}, at least when $\rho_{jl}$ is sufficiently positive. Hence the conditions $\vp_l \geq 0$ and $\<\dfe_{jl},(\mathbf 0,1)\> \geq 0$ are closely related.}.

A consistent approximation of the eikonal equation operator $\kH(d \cU(\fx))$ is obtained by introducing the finite difference approximations \eqref{eq_SchemeAsy}, resulting in the discretized system of equations
\begin{equation}
\label{eq_SchemeEM}
\sum_{1\leq l\leq L} \tilde w_l\sum_{1 \leq j \leq J}\rho_{jl}\left(\frac{u(\fx)-u(\fx- h\dfe_{jl})} h\right)_+^2  = \psi(\fx)^2,
\end{equation}
which is a special case of \eqref{eq_SchemeMax} with $K=1$.
The numerical scheme of the original EM elastica uses LJ points, but in view of Eq.~\eqref{eqdef:tw} the stencil $\cS^\rEM$ of the convexity-constrained variant only contains $I = \lceil L/2 \rceil J$ points at most. 

In the rest of this section, we obtain closed form expressions of the Hamiltonian, metric, and curvature penalty, of the convexity-constrained elastica model. This is only motivated by a better understanding of the model, since for all practical purposes the front propagation and geodesic backtracking are implemented using \eqref{eq_SchemeMax} and \eqref{eq_SchemeMaxV} which only require the weights and offsets of the scheme. As a starter, we present a closed form expression of $\kH^\rEM$ in polar coordinates.

\begin{figure}[t]
\centering
\includegraphics[height=3cm]{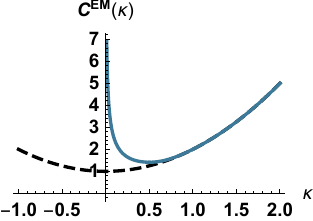}
\hspace{0.3cm}
\includegraphics[height=3cm]{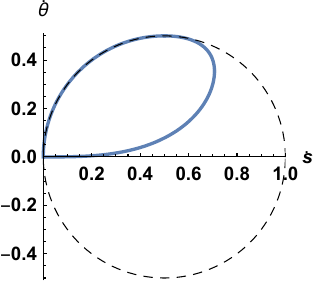}
\caption{
\textbf{Left}: Curvature penalty $\cC^\rEM$ of the EM elastica model (dashed line), and $\kC^\rEM$ of the convexity-constrained variant (solid line). 
\textbf{Right}: Unit vectors in tangent space for the EM elastica model with and without convexity prior. Set of all $(\dot s, \dot \theta)$ such that $\kF^\rEM(\fx,(\dot{s}\dfnt,\dot \theta)) = 1$ (solid line), and likewise for $\cF^\rEM$ (dashed line).}
\label{fig_ControlSet}
\end{figure}

\begin{proposition}
\label{prop_ConvexElastica}
The function $\lambda:[-\pi,\pi]\to\bR$ defined by 
\begin{equation}
\lambda(\phi):=\frac{3}{8} \int_0^{\pi/2} (\cos (\vp-\phi))_+^2 \cos \vp \, d \vp.
\end{equation}
admits the explicit expression  
\begin{equation*}
8\lambda(\phi) = 
\begin{cases}
		0 & \text{if~} \phi \in [-\pi,-\pi/2],\\
		2\cos \phi + 2 \cos \phi \sin \phi & \text{if~} \phi \in [-\pi/2,0],\\
		1+ \cos^2\phi + 2 \cos \phi \sin \phi & \text{if~} \phi \in [0,\pi/2],\\
		1+ \cos^2 \phi +2 \cos \phi & \text{if~} \phi \in [\pi/2,\pi].
\end{cases}
\end{equation*}	
Also, the convexity-constrained elastica Hamiltonian $\kH^\rEM$ reads
\begin{equation*}
\kH^\rEM_\fx(\hat\fx) = r^2 \lambda(\phi), \quad 
\text{when~} \big(\<\hat{x},\dfnt\>,\hat\theta/\beta\big) = r(\cos \phi,\sin \phi),
\end{equation*}
for some $r>0$ and $\phi \in [-\pi,\pi]$.
\end{proposition}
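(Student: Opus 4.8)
The plan is to treat the two assertions separately: the identity for $\kH^\rEM$ follows from a one-line substitution into the integral representation \eqref{eq_ConvexHEM_Int}, while the closed form for $\lambda$ is an elementary (if slightly bookkeeping-heavy) integral computation. I would prove the Hamiltonian formula first, so that the rest reduces to evaluating $\lambda$.

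For the Hamiltonian formula, recall from \eqref{eq_qEM} that $\dfq(\theta,\vp) = \tfrac{\sqrt 3}{2}(\dfnt\cos\vp,\beta^{-1}\sin\vp)$, so that $\<\dfq(\theta,\vp),\hat\fx\> = \tfrac{\sqrt 3}{2}\big(\<\hat x,\dfnt\>\cos\vp + (\hat\theta/\beta)\sin\vp\big)$. Substituting $(\<\hat x,\dfnt\>,\hat\theta/\beta) = r(\cos\phi,\sin\phi)$ and using the angle-subtraction identity, this equals $\tfrac{\sqrt 3}{2}\,r\cos(\vp-\phi)$, hence $\<\dfq(\theta,\vp),\hat\fx\>_+^2 = \tfrac34 r^2(\cos(\vp-\phi))_+^2$. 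Plugging this into \eqref{eq_ConvexHEM_Int} gives $2\kH^\rEM_\fx(\hat\fx) = \tfrac34 r^2\int_0^{\pi/2}(\cos(\vp-\phi))_+^2\cos\vp\,d\vp = 2r^2\lambda(\phi)$ by the very definition of $\lambda$, which is the claimed identity (the case $r=0$ being trivial). So everything now hinges on evaluating $\lambda$.

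For the closed form of $\lambda$, I would first note that the integrand is supported on $\{\vp:\cos(\vp-\phi)>0\} = \bigcup_{k\in\bZ}(\phi-\tfrac\pi2+2k\pi,\ \phi+\tfrac\pi2+2k\pi)$, and intersect this with the integration interval $[0,\pi/2]$. For $\phi\in[-\pi,\pi]$ only the $k=0$ copy can meet $[0,\pi/2]$, which produces exactly the four regimes of the statement: the intersection is empty for $\phi\in[-\pi,-\pi/2]$ (so $\lambda=0$), equals $[0,\phi+\tfrac\pi2]$ for $\phi\in[-\pi/2,0]$, equals all of $[0,\pi/2]$ for $\phi\in[0,\pi/2]$, and equals $[\phi-\tfrac\pi2,\tfrac\pi2]$ for $\phi\in[\pi/2,\pi]$. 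On each regime the positive part is inactive, so the integrand is the genuine function $\cos^2(\vp-\phi)\cos\vp$, which after power-reduction and product-to-sum identities becomes $\tfrac12\cos\vp + \tfrac14\cos(3\vp-2\phi) + \tfrac14\cos(\vp-2\phi)$, with antiderivative $\tfrac12\sin\vp + \tfrac1{12}\sin(3\vp-2\phi) + \tfrac14\sin(\vp-2\phi)$. Evaluating this at the (case-dependent) endpoints and simplifying the resulting trigonometric expressions yields the four displayed formulas for $8\lambda(\phi)$.

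The computation is entirely routine; the only places needing care are the case analysis for the support of the integrand and the trigonometric simplification at the endpoints of each regime. As a built-in sanity check one verifies that the four branches agree at $\phi\in\{-\pi/2,0,\pi/2\}$ and that $\lambda(\pm\pi)=0$, consistent with the fact that $\lambda$ is manifestly a continuous (indeed $C^1$) function of $\phi$. I do not expect any genuine obstacle here beyond careful bookkeeping.
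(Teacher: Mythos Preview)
Your proposal is correct and follows essentially the same route as the paper: the Hamiltonian identity is obtained by substituting the polar parametrization into \eqref{eq_ConvexHEM_Int} via the angle-subtraction formula, and the explicit form of $\lambda$ comes from the same case analysis on the sign of $\cos(\vp-\phi)$ that the paper uses to determine the effective integration interval in each regime. Your version merely reverses the order of the two parts and spells out the antiderivative, which the paper leaves implicit.
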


\begin{proof}
	By considering the sign of $\cos(\phi-\vp)$, we find that $\lambda(\phi)$ is the integral of the trigonometric expression $\cos(\vp-\phi)^2 \cos \vp$ over the interval $\emptyset$, $[0,\pi/2+\phi]$, $[0,\pi/2]$, and $[\phi-\pi/2,\pi/2]$ respectively in each of the four distinguished cases. The expression of $\lambda(\phi)$ easily follows.
	Finally we observe that 
\begin{align*}
	\frac 2 {\sqrt 3}\<\dfq(\theta,\vp), \hat \fx\> 
	&= \<\hat x,\dfnt\> \cos \vp + \frac{\hat \theta}\beta \sin \vp\\
	&= r (\cos \vp \cos \phi+ \sin \vp \sin \phi) =  r\cos (\vp-\phi)
\end{align*}
and the expression of $\kH^\rEM$ follows from \eqref{eq_ConvexHEM_Int}.
\end{proof}

The metric $\kF^\rEM$ can be expressed in terms of the Hamiltonian $\kH^\rEM$ using Legendre-Fenchel duality, inverting the relation \eqref{eq_Hamiltonian}: 
\begin{equation}
\label{eq:MetricFromHamiltonian}
\frac 1 2 \kF^\rEM (\fx,\dot \fx)^2 = \sup_{\hat\fx}\Big\{\<\hat\fx,\dot \fx\> - \kH^\rEM_\fx(\hat\fx)\Big\}.
\end{equation} 
Denoting $\dot{s}:= \|\dot x\|$ and $\dot\nu:=\beta\dot\theta$, the metric $\kF^{\rEM}$ reads for any non-zero vector $\dot \fx =(\dot x,\dot \theta) \in \bE$
\begin{align}
\label{eq_FEMC}
&\kF^\rEM(\fx,\dot \fx)^2=\\
&\begin{cases}
+\infty, &\text{if~} \dot\theta < 0 \text{~or~} \dot{x}\neq \dfnt\|\dx\|,\\
\frac{8}{27 \dot\nu} (9\dot{s}\dot\nu^2 + \dot{s}^3 + (\dot{s}^2-3 \dot\nu^2)^\frac{3}{2})
	&\text{if~} 0 < \dot\nu\leq\dot s/2,\\
4(\dot{s}^2 - 2 \dot{s} \dot\nu + 2 \dot\nu^2), 
	&\text{if~} 0 \leq \dot{s}/2 \leq \dot\nu \leq \dot s,\\
(\dot{s} + \dot\nu^2/\dot{s})^2, &\text{if~} 0\leq\dot{s} \leq \dot \nu.
\end{cases}
\nonumber
\end{align}
\begin{proof}[Sketch of proof of \eqref{eq_FEMC}]
The announced expression of $\kF^\rEM$ was obtained with the help of the formal computing program Mathematica$^{\scriptsize\textregistered}$.
We only show here how it can be checked formally, once it is known. Define the Lagrangian
\begin{equation*}
\lag_\fx(\dot \fx) := \frac 1 2 \kF^\rEM(\fx,\dot\fx)^2,
\end{equation*}
as defined from Eq.~\eqref{eq_FEMC}, and denote the Hamiltonian $\kH_\fx(\hat \fx) := \kH_\fx^\rEM(\hat \fx)$. Our objective is to establish \eqref{eq:MetricFromHamiltonian}, in other words that $\lag_\fx$ is the Legendre-Fenchel dual of $\kH_\fx$. This relation is characterized\footnote{And in addition $\lag_\fx (\dot \fx) = \infty$ when $\dot \fx$ is not in the range of $\nabla \kH$, which is readily checked.} by the identity 
	\begin{equation}
	\label{eq:LF_charact}
		\lag_\fx (\nabla \kH_\fx(\hat \fx)) = \kH(\hat \fx)
	\end{equation}
 for any co-vector $\hat \fx\in \bE^*$.
 Recalling that $\kH_\fx(\hat \fx) = r^2 \lambda(\phi)$ from Proposition~\ref{prop_ConvexElastica}, and differentiating in these polar-like coordinates, we obtain
\begin{equation*}		
\nabla \kH_\fx(\hat \fx) = 2 r \lambda(\phi) (\dfnt\cos \phi, \frac{\sin \phi}\beta)  + r \lambda'(\phi) (-\dfnt \sin \phi, \frac{\cos \phi}\beta)
\end{equation*}
In the sequel we assume that $r = 1$, for simplicity and w.l.o.g.\ by homogeneity of $\kH$ and $\lag$. 
In view of \eqref{eq_FEMC}, we define $\dot s$ and $\dot \nu$ by $\nabla \kH_\fx(\hat \fx) = \dot \fx = (\dot x,\dot \theta) = (\dot s\, \dfnt,\dot \nu/\beta)$, in other words
\begin{align*}
	\dot s &= 2 \lambda(\phi) \cos \phi - \lambda'(\phi) \sin \phi,\\
	\dot \nu &= 2 \lambda(\phi) \sin \phi + \lambda'(\phi) \cos \phi.
\end{align*}	
In order to conclude the proof, we need to insert the explicit expression of $\lambda$, distinguishing cases depending on the interval containing $\phi$. 
If $\phi\in [-\pi,\pi/2]$, then $\lambda(\phi) = \lambda'(\phi) = 0$, thus $\dot s = \dot \nu = 0$ and therefore both sides of~\eqref{eq:LF_charact} are zero. 
If $\phi \in [-\pi/2,0]$, then we obtain $8\dot s = 2(2 \cos \phi + 2 \cos \phi \sin \phi) \cos \phi - (-2 \sin \phi - 2 \sin^2 \phi + 2 \cos^2 \phi) \sin \phi$, and likewise $\dot \nu$ is a polynomial function of $\cos \phi$ and $\sin \phi$. A long and tedious sequence of elementary trigonometric identities, which is not presented here, and which is within the grasp of symbolic computation methods, yields the inequality $0 \leq  \dot \nu \leq \dot s/2$ and the identity 
\begin{equation*}
	\frac{8}{27 \dot\nu} (9\dot{s}\dot\nu^2 + \dot{s}^3 + (\dot{s}^2-3 \dot\nu^2)^\frac{3}{2})
	= 2\cos \phi + 2 \cos \phi \sin \phi,
\end{equation*}
which is equivalent to \eqref{eq:LF_charact} when $\phi \in [-\pi/2,0]$. (Use that $\dot s^2-3\dot \nu^2 = (\cos 2 \phi-\sin \phi)^2/16$.) 
Proving \eqref{eq:LF_charact} thus reduces, likewise when $\phi \in [0,\pi/2]$ and $\phi\in [\pi/2,\pi]$, to checking an inequality and an equality between suitable polynomials in the variables $\cos \phi$ and $\sin \phi$, which follows from elementary (yet tedious) computations.
\end{proof}

The curvature penalty of the convexity-constrained EM model can be recovered from the metric and the relation $\kC^\rEM(\beta \kappa) = \kF^\rEM(\fx, (\dfnt, \kappa))$, in view of \eqref{eq_GeneralMetric}. This also amounts to choosing $\dot{s} = 1$ and $\dot \nu = \beta\kappa$ in \eqref{eq_FEMC}, and therefore 
\begin{equation*}
	\kC^\rEM(\dot \nu) 
	= 
	\begin{cases}
	+\infty & \text{if } \nu \leq 0\\
	\sqrt{\frac 8 {27 \dot \nu} (9\dot \nu^2 + 1 + (1-3 \dot \nu^2)^\frac 3 2)} & \text{if } 0 < \dot \nu \leq 1/2\\ 
	2\sqrt{1-2\dot \nu +2 \dot\nu^2} & \text{if } 1/2 \leq \dot \nu \leq 1,\\
	1+\dot \nu^2, & \text{if } 1 \leq \dot \nu.	
	\end{cases}
\end{equation*}
The penalty $\kC^\rEM$ is infinite when the path curvature is negative, as expected, and coincides with the original EM elastica model $\cC^\rEM$ when $1 \leq \dot \nu := \beta \kappa$, in other words when the path curvature exceeds $1/\beta$. Note that straight line segments have finite energy for the convexity-constrained RSF and Dubins models, since $\kC^\rRS(0) = \kC^\rmD(0) = 1$, whereas the convexity-constrained elastica model only allows strictly convex paths, since $\kC^\rEM(0) = \infty$.
The curvature penalty function $\kC^\rEM$, and the set of all $(\dot s, \dot \theta)$ such that $\kF^\rEM(\fx,(\dot s \dfnt,\dot \theta)) = 1$ with $\beta=1$, are illustrated in Fig.~\ref{fig_ControlSet}.

\section{Finding Simple Closed Convex Curves}
\label{sec_SearchingSpace}
We have introduced in \cref{sec_ConvexityTheorem} three convexity-constrained  geodesic metrics $\kF$, on the state space $\bM = \Omega\times \bS^1$ of physical positions and orientations, featuring both a convexity shape prior and a curvature regularization. In this section, we address the problem of tracking orientation-lifted geodesic paths $\cG:=(\mathscr{C},\eta)$, with respect to the chosen metric $\kF$, whose physical projections $\mathscr{C} \in C^2([0,1],\Omega)$ on the physical space $\Omega$ are simple closed and convex. Moreover, we introduce an additional constraint to ensure that the planar curves $\mathscr{C}$ enclose a given point in $\Omega$, so as to accommodate various practical segmentation tasks.
Toward that purpose, we define a reduced domain $\tilde\bM$, by removing an appropriate region which acts as a wall, and choose a specific source and endpoint $\fp$, see~\cref{subsec_ExtendCG}. The work presented in this paper can be regarded as a variant of the circular geodesic model~\cite{appleton2005globally}. It combines constraints on the physical projections  and the angular components of the orientation-lifted curves, so as to control the associated total curvature and to eliminate curves whose physical projections have \emph{self-intersections}.

\subsection{Endpoints and Artificial Obstacles} 
\label{subsec_ExtendCG}
The circular geodesic (CG) model~\cite{appleton2005globally} is a practical method for extracting closed planar geodesic curves within the image domain $\Omega$. Our variant of this model requires the user to fix a planar point $z\in \Omega$, and an orientation lifted point $\fp = (p,\theta_p) \in \bM = \Omega\times \bS^1$, subject to the compatibility condition $\det(p-z,\, \dfn_{\theta_p}) >0$. 

In the context of interactive image segmentation, it is natural to place the point $p$ on the boundary of the target region, with $\theta_p$ denoting the tangent orientation of this boundary in the trigonometric orientation. The point $z$ is placed inside the target convex region, allowing the user to guide the image segmentation in a simple and reliable manner. A crucial ingredient of the CG model is a ray line, i.e.\, a half line, 
introduced within the domain $\Omega$ and defining a cut which locally disconnects its two sides. More precisely, the ray line $\ray$ originating from $z$ and passing through $p$ is regarded as an obstacle. We also introduce a barrier at the position $\theta_p$ in the angular domain $\bS^1$. See \cref{fig_CSG}, where $\ray$ is dotted, the angular coordinate $\theta_p$ of $\fp$ is indicated by the arrow $(\cos\theta_p,\sin\theta_p)$, and an admissible  planar path $\gamma$ is shown.
The accessible domain remaining for the orientation-lifted geodesic paths is thus
\begin{equation}
\label{eqdef_obs}
	\tilde \bM := \{(x, \theta) \in \bM;\,  x \notin \Re_z(p) \text{ and } \theta \neq \theta_p\}.
\end{equation}
Our method exploits a path of minimal energy $\kL$, from the point $\fp$ to itself, and within the obstacle free domain $\tilde \bM$: we solve
\begin{align}
\label{eq_optim}
\inf\Big\{\kL(\Gamma);\, &\Gamma \in C^1([0,1],\bM),\nonumber\\
&\Gamma(0) = \Gamma(1) = \fp,\Gamma(\varrho) \in \tilde \bM, \forall \varrho \in ]0,1[\Big\}.
\end{align}
We show in Section~\ref{subsec_total_curvature} that the physical projection of any candidate minimizer to the problem~\eqref{eq_optim} is a simple closed and convex curve enclosing the point $z$, and we describe Section~\ref{sec_HFM} a global numerical optimization procedure.

\begin{figure}[t]
\centering
\includegraphics[width=8cm]{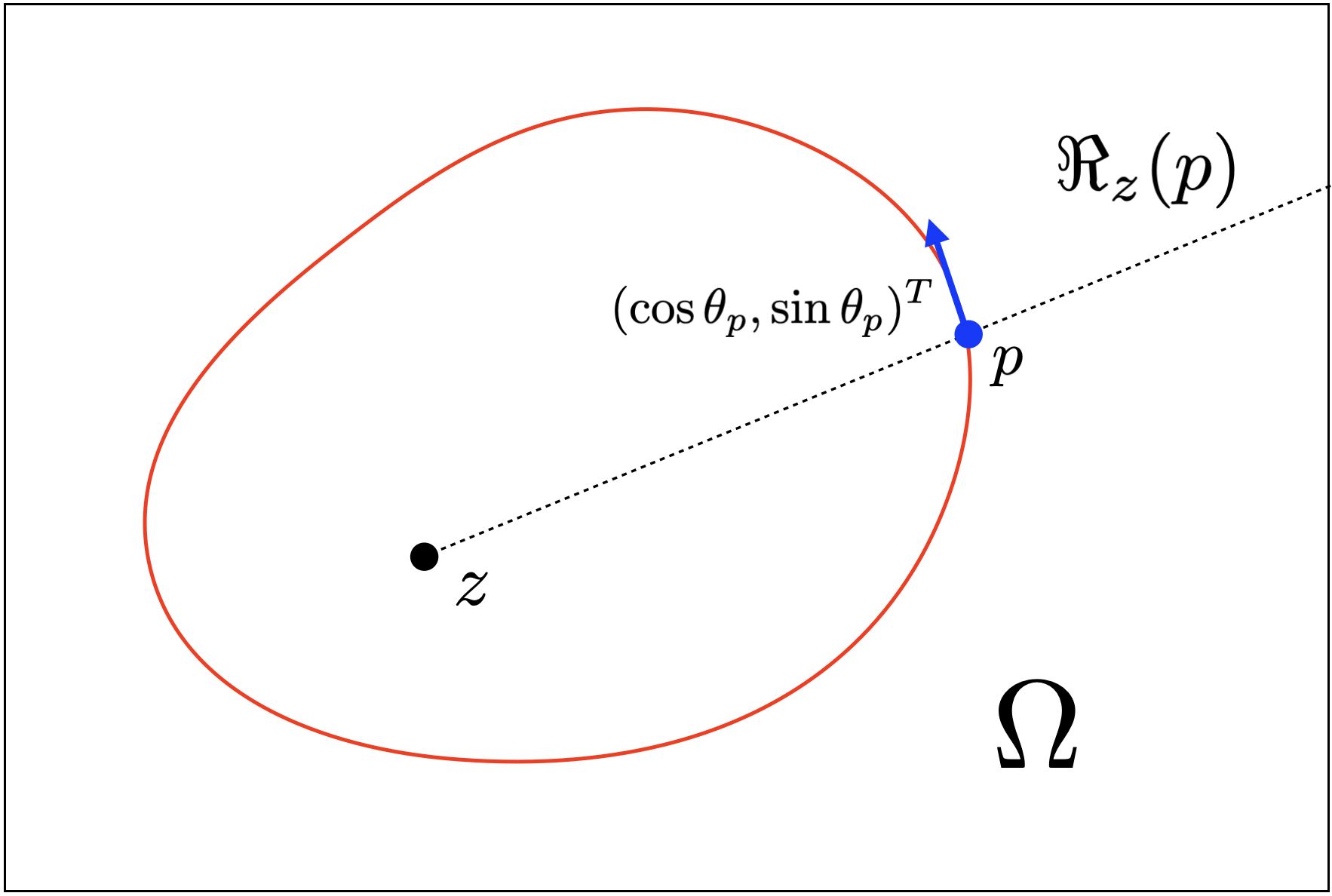}
\caption{The black dash line indicates the ray line $\Re_z(p)$, which originates from the point $z\in\Omega$ (black dot) and passes through $p$ (blue dot). The arrow stands for the direction of $(\cos\theta_p,\sin\theta_p)$, respectively.}
\label{fig_CSG}	
\end{figure}

\subsection{Total Curvature and Curve Simplicity}
\label{subsec_total_curvature}
In this section, we show that the constraints in Eq.~\eqref{eq_optim} ensure the simplicity of the physical projection $\gamma$ of any admissible orientation-lifted curve $\Gamma=(\gamma,\eta)$. 

For that purpose, let us summarize the properties obeyed by a candidate minimizer $\Gamma = (\gamma,\eta)$ to the optimization problem \eqref{eq_optim}:
\begin{align}
\label{cons_bc}
	\gamma(0) &= \gamma(1) = p, &
	\eta(0) &= \eta(1) = \theta_p, \\
\label{cons_obs}
	\gamma(\varrho) &\notin \ray, & \eta(\varrho) &\neq \theta_p, \\
\label{cons_met}
	\dot{\gamma}(\varrho)&=\dfn_{\eta(\varrho)}\|\dot\gamma(\varrho)\|, &
	\dot \eta (\varrho) &\geq 0.
\end{align}
The first line follows from the boundary condition $\Gamma(0) = \Gamma(1) = \fp$. The second line holds for $\varrho \in ]0,1[$ and follows from the choice \eqref{eqdef_obs} of the obstacle-free domain $\tilde\bM$. The third line holds for $\varrho \in [0,1]$, and follows from $\kL(\Gamma)< \infty$ (otherwise $\Gamma$ is not a candidate minimizer) and from the choice of metric, see Eqs.~\eqref{eq_GenericCharact} and \eqref{eq_LengthConvexity}.

The rest of this section will be devoted to the proof of the following proposition. 
\begin{proposition} 
\label{Prop_SCC}
Assume that $(\gamma,\eta) \in C^1([0,1],\bM)$ obeys Eqs.~\eqref{cons_bc}, \eqref{cons_obs}, \eqref{cons_met}, and that $\gamma'$ is non-vanishing over $[0,1]$. Then the curve $\gamma$ is simple, closed, convex, and encloses the point $z$. Furthermore, the total (absolute) 
	curvature of $\gamma$ is $2 \pi$.
\end{proposition}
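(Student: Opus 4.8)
The plan is to pass to a continuous real lift $\tilde\eta:[0,1]\to\bR$ of the angular component, normalized by $\tilde\eta(0)=\theta_p$. Since $\dot\eta\ge 0$ by \eqref{cons_met}, $\tilde\eta$ is non-decreasing; since $\eta(0)=\eta(1)=\theta_p$ while $\eta(\varrho)\ne\theta_p$ for $\varrho\in\,]0,1[$ by \eqref{cons_bc}--\eqref{cons_obs}, the endpoint value is $\tilde\eta(1)=\theta_p+2\pi m$ for some integer $m\ge 0$, and the intermediate value theorem rules out $m=0$ (which would force $\tilde\eta\equiv\theta_p$, contradicting \eqref{cons_obs}) and $m\ge 2$ (which would force $\tilde\eta$ to attain the forbidden value $\theta_p+2\pi$ at an interior time). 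Hence $\tilde\eta(1)=\theta_p+2\pi$ and $\tilde\eta(\varrho)\in\,]\theta_p,\theta_p+2\pi[$ on $]0,1[$. The total curvature claim then follows at once: by \eqref{eq_Curvature} and $\dot\eta\ge 0$ the curvature $\kappa=\dot\eta/\|\dot\gamma\|$ is non-negative, so the total absolute curvature equals $\int_0^1\kappa\|\dot\gamma\|\,d\varrho=\int_0^1\dot\eta\,d\varrho=\tilde\eta(1)-\tilde\eta(0)=2\pi$.

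The core of the argument is a supporting-line lemma: for every $s\in[0,1]$ one has $\det(\dfn_{\tilde\eta(s)},\gamma(\varrho)-\gamma(s))\ge 0$ for all $\varrho\in[0,1]$, i.e.\ the whole curve lies weakly on one (the left) side of the tangent line at $\gamma(s)$. To prove it, fix $s$, set $h(\varrho):=\det(\dfn_{\tilde\eta(s)},\gamma(\varrho)-\gamma(s))$, and observe that $h(s)=0$, that $h(0)=h(1)$ because $\gamma(0)=\gamma(1)=p$, and that $h'(\varrho)=\|\dot\gamma(\varrho)\|\sin(\tilde\eta(\varrho)-\tilde\eta(s))$ by \eqref{cons_met}. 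Since $\tilde\eta$ is monotone and increases by exactly $2\pi$, the argument $\tilde\eta(\varrho)-\tilde\eta(s)$ runs monotonically through an interval of length $2\pi$ containing $0$, so the sign of $\sin(\tilde\eta(\varrho)-\tilde\eta(s))$ changes in a controlled pattern; separating the cases $\tilde\eta(s)\le\theta_p+\pi$ and $\tilde\eta(s)\ge\theta_p+\pi$ and identifying the subintervals of $[0,s]$ and $[s,1]$ on which $h$ is monotone, one deduces $h\ge 0$ throughout, the identity $h(0)=h(1)$ being exactly what closes the estimate. This bookkeeping is the one genuinely technical step; the remainder is soft.

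Simplicity then follows by contradiction. Suppose $\gamma(s)=\gamma(t)$ with $0\le s<t\le 1$ and $(s,t)\ne(0,1)$. The cases $s=0$ and $t=1$ are impossible, for they would make $\gamma$ return to $p$ at an interior time, forbidden by \eqref{cons_obs} since $p\in\Re_z(p)$; so $0<s<t<1$. Applying the lemma with base parameter $s$, the function $h$ attains its global minimum $0$ at the interior point $t$, hence $h'(t)=0$, forcing $\sin(\tilde\eta(t)-\tilde\eta(s))=0$; since $\tilde\eta(t)-\tilde\eta(s)\in[0,2\pi[$ and the value $0$ is excluded (it would make $\gamma$ a non-constant straight segment on $[s,t]$, whence $\gamma(t)\ne\gamma(s)$), we get $\tilde\eta(t)=\tilde\eta(s)+\pi$. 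The lemma with base parameter $t$ then reads $\det(-\dfn_{\tilde\eta(s)},\gamma(\varrho)-\gamma(s))\ge 0$, i.e.\ $h\le 0$; combined with $h\ge 0$ this forces $h\equiv 0$, so $\gamma$ lies in a straight line, contradicting that its tangent direction $\dfn_{\tilde\eta(\varrho)}$ spans all of $\bS^1$. Hence $\gamma$ is injective on $[0,1[$, i.e.\ simple and closed.

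It remains to obtain convexity and the enclosure of $z$. The simple closed $C^1$ curve $\gamma$ has turning number $\frac{1}{2\pi}\int_0^1\dot\eta\,d\varrho=1$, so it is positively oriented, and $\kappa\ge 0$ then makes it convex in the sense of Definition~\ref{def_Convex} (equivalently, by the supporting-line lemma it bounds the intersection of the half-planes it lies in). For the enclosure, the compatibility condition yields $\det(\dfn_{\theta_p},z-p)=\det(p-z,\dfn_{\theta_p})>0$, so $z$ lies strictly on the interior side of the tangent line to $\gamma$ at $p=\gamma(0)$; since $\gamma$ is $C^1$, the region it bounds coincides near $p$ with that open half-plane, so $z+\lambda(p-z)$ lies inside $\gamma$ for $\lambda$ slightly below $1$. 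The ray $\Re_z(p)$ meets the Jordan curve $\gamma([0,1])$ only at $p$ by \eqref{cons_obs}, so the connected segment $\{z+\lambda(p-z):0\le\lambda<1\}$, which contains such interior points, lies entirely inside $\gamma$; in particular $z$ is enclosed by $\gamma$. The main obstacle throughout is the case analysis in the supporting-line lemma; everything downstream of it is elementary.
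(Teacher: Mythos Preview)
Your proof is correct, and it takes a genuinely different route from the paper's after the common opening (the real lift $\tilde\eta$, the determination $\tilde\eta(1)=\theta_p+2\pi$, and the total curvature computation).

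The paper argues as follows: whenever $\eta(\varrho_1)=\eta(\varrho_2)$, monotonicity of $\tilde\eta$ forces $\tilde\eta$ to be constant on $[\varrho_1,\varrho_2]$, so $\gamma$ is a straight segment there and the tangent lines at the two points coincide; a pigeonhole argument then shows that $\gamma$ has no three pairwise distinct parallel tangent lines, and the paper immediately concludes that $\gamma$ bounds a convex set. Simplicity is obtained only implicitly, as a consequence of being the boundary of a convex region; the passage ``no three distinct parallel tangents $\Rightarrow$ boundary of a convex set'' is the one non-trivial step, and the paper neither proves it nor gives a reference.

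Your approach replaces that black box with the explicit supporting-line lemma $h(\varrho)=\det(\dfn_{\tilde\eta(s)},\gamma(\varrho)-\gamma(s))\ge 0$, proved by tracking the sign changes of $h'(\varrho)=\|\dot\gamma(\varrho)\|\sin(\tilde\eta(\varrho)-\tilde\eta(s))$ and closing the estimate with $h(0)=h(1)$. This is fully self-contained and elementary; the case analysis you flag as ``the one genuinely technical step'' does go through exactly as you outline (in each of the two cases $h$ has a down--up--down or up--down--up profile with $h(s)=0$ and $h(0)=h(1)$, which pins $h\ge 0$). From the lemma you obtain simplicity directly (the $\tilde\eta(t)-\tilde\eta(s)\in\{0,\pi\}$ dichotomy and the ensuing contradiction are clean), and convexity is immediate since every tangent line is a supporting line. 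The enclosure argument is essentially the same as the paper's.

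In short: the paper's proof is shorter but leans on a classical characterization of planar convex curves; yours is longer but entirely explicit, and the supporting-line lemma you introduce gives a slightly stronger intermediate statement than the paper needs.
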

\begin{proof}
We introduce a  parametric function $\underline \eta : ]0,1[ \to \bR$ such that $\underline \eta(\varrho) \equiv \eta(\varrho) \pmod{2\pi}$, for all $\varrho \in ]0,1[$, and likewise $\underline \theta_p \in \bR$ such that $\underline \theta_p \equiv \theta_p \pmod{2\pi}$.
We may assume that $\underline \eta(0) = \underline \theta_p$, by~(\ref{cons_bc}, right). Observe that $\underline \eta(\varrho)$ is non-decreasing, by~(\ref{cons_met}, right), and that $\underline\eta(\varrho) \notin \theta_p + 2 \pi \bZ$ for all $\varrho \in ]0,1[$, by~(\ref{cons_obs}, right). As a result 
 \begin{equation}
 \label{eq_eta_bound}
 	\underline\theta_p < \underline\eta(\varrho) < \underline\theta_p + 2 \pi,\,
 	\forall \rho \in ]0,1[,
 \end{equation} 
 and finally $\underline \eta(1)=\underline\theta_p + 2 \pi$, by Eq.~(\ref{cons_bc}, right).

The path $\gamma$ has a non-vanishing velocity, and obeys the lifting compatibility condition (\ref{cons_met}, left), hence its curvature is obtained as  $\kappa = \dot \eta/ \|\dot \gamma\|$, see Eq.~\eqref{eq_Curvature}. 
Note that the curvature $\kappa$ is non-negative, by Eq.~(\ref{cons_met}, right), and thus equal to the absolute curvature.
The total curvature of $\gamma$ (absolute or otherwise) is obtained as 
\begin{equation}
\label{eq_TC}
\tcurv(\gamma) =\int_0^1 \kappa(\varrho)\, \|\dot\gamma(\varrho)\|\,d\varrho
= \int_0^1 \dot\eta(\varrho) \,d\varrho,
\end{equation}
and therefore, in view of previously established properties of $\underline \eta$
\begin{equation*}
	\tcurv(\gamma) 
	= \int_0^1 \dot{\underline \eta}(\varrho) \,d\varrho
	= \underline \eta(1) - \underline \eta(0)
	= (\theta_p + 2 \pi)  - \theta_p = 2 \pi.
\end{equation*}
Consider $0\leq \varrho_1<\varrho_2< 1$ such that $\eta(\varrho_1) = \eta(\varrho_2)$, if they exist. Then from Eq.~\eqref{eq_eta_bound} and the monotony of $\eta$ we obtain $\underline \eta(\varrho) = \underline \eta(\varrho_1)$ for all $\varrho_1 \leq \varrho \leq \varrho_2$, hence $\gamma$ that is restricted to $[\varrho_1,\varrho_2]$ is a straight segment, and therefore the tangent lines at $\gamma(\varrho_1)$ and at $\gamma(\varrho_2)$ coincide. It easily follows that there are no three\footnote{If the tangent lines at $\gamma(\varrho_1)$, $\gamma(\varrho_2)$ and $\gamma(\varrho_3)$ are parallel, then $\eta(\varrho_1)\equiv \eta(\varrho_2) \equiv \eta(\varrho_3) \pmod{\pi}$, thus by the pigeonhole principle one has e.g.\ $\eta(\varrho_1)\equiv \eta(\varrho_2) \pmod{2\pi}$, and then by the previous argument the tangents at $\gamma(\varrho_1)$ and $\gamma(\varrho_2)$ are identical.} points of $\gamma$ such that the tangents at these points are pairwise distinct and parallel, and thus that $\gamma$ is the boundary of a convex set $R\subset \bR^2$.

By construction, the boundary $\partial{R}$ intersects the half line $\Re_z(p)$ at the point $p$ and nowhere else. Recalling that $\det(p-z,\, \dfn_{\theta_p}) > 0$ we see that $\Re_z(p)$ is not tangential to $\partial{R}$, and thus $R$ contains the bounded connected component of $\Re_z(p) \setminus \{p\}$, which is the segment $[z,p[$. In other words, the curve $\gamma$ encloses the point $z$, and the announced result follows.
\end{proof}

\noindent\emph{Summary}.
In Sections~\ref{sec_ConvexityTheorem} and~\ref{sec_SearchingSpace}, we have introduced new convexity-constrained geodesic models as variants of the respective classical RSF, Dubins and EM elastica models. In summary, the proposed variant models differ to those original ones in two ways: firstly, the proposed models impose a constraint on the sign of the curvature of curves, leading to new geodesic metrics and associated Hamiltonians which admit the prescribed convexity shape constraint. Secondly, in contrast to the classical curvature-penalized models, the proposed convexity-constrained models track geodesic paths in a subdomain $\tilde\bM$ obtained by subtracting well chosen obstacles to the global domain $\bM$, yielding a bound on the total curvature of the physical projection curves to ensure their simplicity. 

Furthermore, the HFM approach~\cite{mirebeau2018fast} exploits the Hamiltonian expressions to describe those curvature-penalized models, in order to design efficient discretization schemes for numerically solving the associated eikonal equations. This numerical approach differs from~\cite{duits2018optimal} and \cite{chen2017global}, which rely instead on relaxations of the metric, as briefly recalled on Appendix. The HFM approach is more general, customizable, and easily implemented on GPUs.

\section{Numerical Solutions} 
\label{sec_HFM}
Let us consider the path-length quasi-distance function $\cD$ that is defined by
\begin{align*}
	\cD(\fx,\fy) := \inf \Big\{\kL(\Gamma);\, &\Gamma\in C^1([0,1],\tilde \bM), \nonumber\\
	&\ \Gamma(0) = \fx, \Gamma(1) = \fy\Big\},
\end{align*}
for all points $\fx,\fy$ of the obstacle-free domain $\tilde \bM$, see Eq.~\eqref{eqdef_obs}.
We set here the objective of computing the distance $\cD(\fp_0,\fp_1)$ with
\begin{align}
\label{eqdef_endpoints}
	\fp_0 &:= (p+\epsilon \dfn_{\theta_p}, \theta_p+\epsilon), &
	\fp_1 &:= (p-\epsilon \dfn_{\theta_p}, \theta_p-\epsilon),
\end{align}
where $\fp = (p,\theta_p)$ is a user defined orientation-lifted point, see Section~\ref{subsec_ExtendCG}, and $\epsilon>0$ is a small parameter. Letting $\epsilon \to 0$, we recover the original problem \eqref{eq_optim} of finding a path from $\fp$ to itself, within $\tilde \bM$ and minimizing $\kL$. Note that the endpoints $\fp_0,\fp_1$ may be replaced with any small perturbations of $\fp$ whose components lie on the similar side of the obstacle $\Re_z(p)$ in the planar domain, and of the obstacle $\{\theta_p\}$ in the angular domain. 
In practice, we may choose the points of the discretization grid $\bM_h$, see Eq.~\eqref{eqdef_grid}, which are the closest to $\fp$ and obey these geometrical constraints.

We define a geodesic distance map $\cU : \tilde \bM \to [0,\infty]$ by
\begin{equation}
	\cU(\fx) := \cD(\fp_0,\fx).
\end{equation}
It obeys the eikonal equation $\kH(\nabla \cU(\fx) ) = \frac 1 2 \psi(\fx)^2$ on $\tilde \bM \sm \{\fp_0\}$, with $\cU(\fp_0) = 0$ and with outflow boundary conditions on $\partial \tilde \bM$. Here $\kH$ is the Hamiltonian corresponding to one of the convexity-constrained geodesic models, see Section~\ref{sec_ConvexityTheorem}.
A numerical solution $u : \bM_h \to \bR$ to the discretized eikonal equation \eqref{eq_SchemeMax} is computed, using one of the solvers described in the next subsection. Once $u$ is known, a globally optimal path from $\fp_0$ to $\fp_1$ can be extracted by backtracking as defined in Eq.~\eqref{eq_GeoODE}, based on the expression~\eqref{eq_SchemeMaxV} of the geodesic flow.

\subsection{Solving for the Geodesic Distance Map}
\label{subsec:solving_distance}
In this section, we briefly discuss the numerical computation of the unique solution to the discretized eikonal equation \eqref{eq_SchemeMax}, which is quite standard. Indeed, our main contribution lies at the design of the curvature-constrained geodesic metrics, of the scheme coefficients and offsets, and of the artificial obstacles, see~\cref{sec_ConvexityTheorem,sec_SearchingSpace}.
For simplicity, we assume w.l.o.g.\ that the desired path endpoints $\fp_0,\fp_1$ belong to the discretization grid $\bM_h$, see~Eqs.~\eqref{eqdef_grid} and \eqref{eqdef_endpoints}.

For numerical purposes, an array of unknowns $u : \bM_h \to \bR$ is introduced, and initialized to $u(\fp_0) = 0$ and $u = \infty$ elsewhere on $\bM_h \sm \{\fp_0\}$.
In the course of the numerical solver, the unknown is updated by
solving locally the numerical scheme~\eqref{eq_SchemeMax}, at some given point $\fx \in \bM_h\sm \{\fp_0\}$. In such an update, we assign $u(\fx) \gets \ku$, where $\ku$ solves
\begin{equation}
\label{eq_Update}
	\max_{1 \leq k \leq K} \sum_{1 \leq i \leq I} \rho_{ik}(\fx) (\ku - \ku_{ik})_+^2 = h^2 \psi(\fx)^2,
\end{equation}
and where for all $1 \leq i \leq I$ and $1 \leq k \leq K$ one has denoting $\fy_{ik} := \fx-h \dfe_{ik}$
\begin{equation}
\label{eq_neighbor_value}
	\ku_{ik} = 
\begin{cases}
	+ \infty & \text{if } \fy_{ik} \notin \bM_h \text{ or } [\fx, \fy_{ik}] \not\subset \tilde \bM, \\
	+ \infty & \text{if } \fy_{ik} \text{ not } \textsc{Accepted},\\
	u(\fy_{ik}) & \text{else}.
\end{cases}
\end{equation}
The first line serves to apply outflow boundary conditions on $\partial \bM$, and to avoid any front propagation across the obstacles introduced in the  obstacle-free domain $\tilde \bM$,  defined by~\cref{eq_optim}, which ensures that the computation of the geodesic distance values is restrained to $\tilde \bM$. The second line of Eq.~\eqref{eq_neighbor_value} is specific to the fast marching numerical method. Note that letting $\ku_{ik} = \infty$ is equivalent to ignoring the contribution of the stencil point $\fy_{ik}$, since the update equation involves the difference $(\ku - \ku_{ik})_+$. 
Once the values $(\ku_{ik})_{1 \leq i \leq I}^{1\leq k \leq K}$ are gathered and sorted, one can find the update value $\ku$ by solving at most $KI$ univariate polynomial equations of degree two, since Eq.~(\ref{eq_Update}, l.h.s.) is a piecewise quadratic function of $\ku$. A similar procedure was already exploited in \cite{sethian1999fast}.

The ordering of the updates \eqref{eq_Update} depends on the numerical solver used. On a CPU processor, the fast marching method~\cite{sethian1999fast} is used, which is made possible by the mathematical structure\footnote{Namely,  that~\eqref{eq_Update} is a non-decreasing function of $[(\ku-\ku_{ik})_+]^{1 \leq k \leq K}_{1 \leq i \leq I}$.} of the scheme as in  \cite{mirebeau2019hamiltonian,mirebeau2018fast,mirebeau2019riemannian}. This method works in a single pass way over the domain (the points of $\bM_h$ are successively \textsc{Accepted} one by one based on a priority queue) and has complexity $\cO(N \ln N)$, where $N = \#(\bM_h)$ is the number of discretization points.
The front propagation can be stopped as soon as the target point $\fp_1$ is reached. If a GPU accelerator is available, on the other hand, then a variant of the massively parallel fast iterative method~\cite{Jeong2008FIM} is used (which does not involve \textsc{Accepted} tags). The increased complexity of this approach, namely $\cO(N^{1+1/d})$ with $d=3$, and the stricter stopping criterion, namely global convergence of $u$, are more than compensated by the massive thread parallelism, resulting in a $15 \times$ or better speedup in applications. 

Given a source point $\fp=(p,\theta_p)$, \cref{Prop_SCC} ensures that the physical projections of the computed geodesic paths are simple closed and convex, provided that the fronts propagate within the subdomain $\tilde\bM$, hence \emph{not} across the obstacle $\{\theta_p\}\subset\bS^1$. 
In this case, the obstacle $\{\theta_p\}$ actually limits the total curvature $\tcurv$~\eqref{eq_TC} of those planar curves to $2 \pi$. 
For another option considered in~\cite{chen2021elastica}, let us mention that it is possible to track the total curvature $\tcurv$  by adapting a numerical method with an accumulation way~\cite{deschamps2001fast}, originally introduced to simultaneously estimate the weighted length and the Euclidean length of geodesic paths between the source point and any target point, without backtracking these paths. This allows to directly implement the constraint $\tcurv \leq 2 \pi$, rather than rely on the obstacle $\{\theta_p\}$. In practice, the two approaches produce similar numerical results, and in particular both guarantee the simplicity of the physical projection curves.

\section{Applications to Active Contours}
\label{sec_ACAPP}
In this section, We apply the proposed convexity-constrained geodesic  models to solve the active contour problems.

\subsection{Computation of Region-based Velocity}
\label{subsec_RegionRanders}
\subsubsection{Region-based Randers Geodesic Model}
We start from a typical active contour energy functional comprising of a  region-based homogeneity term $\Xi$ as well as a regularization term $\Phi$
\begin{equation}
\label{Eq_ACEnergy}
E(\gamma):= \mu\,\Xi(\gamma)+\Phi(\gamma),
\end{equation}
where $\mu>0$ is a positive constant and $\gamma\in\Lip([0,1],\Omega)$ is a closed curve. In general,  the term $\Phi(\gamma)$ can be defined as a weighted curve length w.r.t. a Riemannian metric, of the form 
\begin{equation*}
\Phi(\gamma)=\int_0^1 \|\dot\gamma(\varrho)\|_{\cM(\gamma(\varrho))}\,d\varrho.
\end{equation*}
The metric tensor $\cM$ is derived from the image gradients, in such way that $\|\dot x\|_{\cM(x)}=\sqrt{\<\dx,\cM(x)\dx\>}$ is low~\cite{chen2017global} when an edge passes by the point $x\in \Omega$ with a tangent direction that approximates the unit vector $\dx$.

\begin{figure*}[t]
\centering
\includegraphics[height=3.2cm]{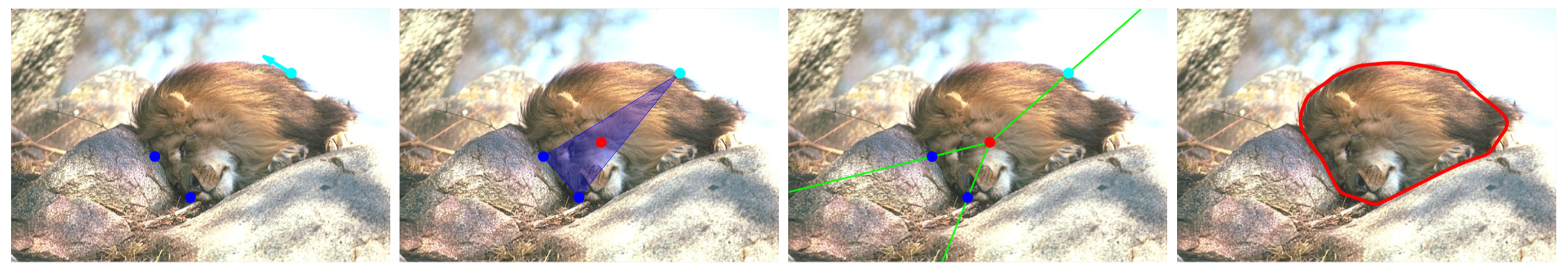}
\caption{An example for initialization derived from the landmark points-based annotation. The blue and cyan dots in columns $1$ to $4$ denote the landmark points. Column $1$: The cyan dot with the arrow indicate the source point $\fp$. Column $2$: the blue transparent region is the convex hull of all the landmark points, where the red dot represents its barycenter center. Column $3$: the green lines represent the additional obstacles. Column $4$: the segmentation contour (red line). }
\label{fig_BoundaryInteraction}				
\end{figure*}

\begin{figure*}[t]
\centering
\includegraphics[height=3.2cm]{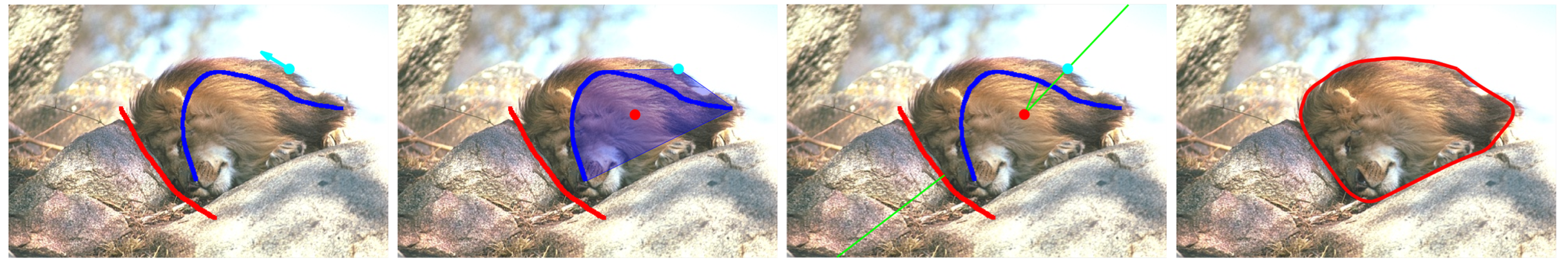}
\caption{An example for initialization derived from the scribbles-based annotation.  The blue and red curvilinear structures in columns $1$ to $4$ represent the interior and exterior scribbles, respectively.  Column $1$: The cyan dot and the arrow indicate the source point $\fp$. Column $2$:  the blue transparent region indicates the convex hull.  Column $3$: the green lines represent the additional obstacles. Column $4$: the segmentation contour (red line).}
\label{fig_RegionInteraction}				
\end{figure*}

The region-based homogeneity term $\Xi$, also referred to as the image appearance model,  measures the homogeneity of image features in each region. In this section, we take the region competition model~\cite{zhu1996region} with the Gaussian mixture model (GMM) as an example to formulate the region-based term 
\begin{equation}
\label{eqdef:Xi}
\Xi(\gamma)=\int_{R_1}\xi_1(x)dx+\int_{R_2}\xi_2(x)dx,
\end{equation}
where $R_1$ and $R_2$ are the regions inside and outside the closed curve $\gamma$. The scalar-valued functions $\xi_i:\Omega\to\bR$ ($i=1,2$) encode the image homogeneity features within each region $R_i$. We compute each $\xi_i$  using a GMM, for which the probability distribution function (PDF) $P_i(z;\Theta_i)$ is taken as a weighted sum of $N$ Gaussian PDFs. Let $f:\Omega\to\bR^d$ be an image, where $d=1$ (resp. $d=3$) implies that $f$ is a gray level (resp. color) image. Then one has 
\begin{equation}
 \xi_i(x)=-\log\big(P_i(f(x);\Theta_i)\big),~\forall x\in\Omega,
\end{equation}
where $\Theta_i$ are the parameters of the PDFs of the GMM. Moreover, the piecewise constant appearance model~\cite{chan2001active} is known as  an efficient variant of the GMM-based term $\Xi$. In this case, the function $\xi_i$ can be computed as $\xi_i(x)=\int_{R_i}	\|f(x)-\mathfrak{c}_i\|^2 dx$, where $\mathfrak{c}_i=(c_{i,1},c_{i,2},c_{i,3})\in\bR^d$  such that $c_{i,j}$ with $1\leq j\leq d$ stands for the mean intensity of the $j$-th image channel within each corresponding region $R_i$. 

In the region-based Randers geodesic model~\cite{chen2016finsler,chen2019region}, image segmentation is solved by minimizing the energy $E$ as formulated in~\eqref{Eq_ACEnergy}. 
A key ingredient of this model is to express, using Stokes theorem, the region integral \eqref{eqdef:Xi} as a boundary integral (plus a constant). As a result, the energy~\eqref{Eq_ACEnergy} is reformulated as a weighted anisotropic curve length,
i.e. $E(\gamma) = \tilde\Phi(\gamma) + \sigma$, where $\sigma$ is a scalar value independent of $\gamma$, and where
\begin{equation}
\label{eq_RandersAppro}
\tilde\Phi(\gamma)=\int_0^1 \|\dot\gamma(\varrho)\|_{\cM(\gamma(\varrho))}+\mu\<\varpi(\gamma(\varrho)),\dot\gamma(\varrho) \> d\varrho.
\end{equation}
The vector field $\varpi:\bR^2\to\bR^2$ is defined over an open bounded region $U\subset\Omega$, and is obtained as the solution of the linear PDE
\begin{equation}
\label{eq_ConVec}
\min~\int_{\bR^2}\|\varpi(x)\|^2 d x,~s.t.~
\curl\varpi = (\xi_1-\xi_2) \chi_U,
\end{equation}
where $\chi_U:\bR^2\to\{0,1\}$ is the characteristic function of the subdomain $U$. The solution to the linear problem \eqref{eq_ConVec} can be obtained by convolution of the r.h.s.\ $(\xi_1-\xi_2) \chi_U$ with a suitable kernel~\cite{chen2019region}. 
In a variant of \eqref{eq_ConVec}, used in the experiments, the objective function is replaced with $\int_{U}\|\varpi(x)\|^2 d x$, so that a staggered grid finite difference method can be used over the domain $U$, which provides less guarantees but often yields better numerical behavior \cite{chen2019region}.

The weighted length~\eqref{eq_RandersAppro} is an instance of Randers geometry, defined by a non-symmetric metric
\begin{equation}
\label{eq_RegionRandersMetric}
\kR(x,\dx)=\|\dx\|_{\cM(x)}+\mu\<\varpi(x),\dx\>.
\end{equation}
It is proven in~\cite{chen2019region} that the metric $\kR$ is positive definite provided the region $U$ is sufficiently small and the first order term is defined by Eq.~\eqref{eq_ConVec}. During the curve evolution, $U$ should be understood as the search space for the evolving curves.

\begin{figure}[t]
\centering
\includegraphics[height=4cm]{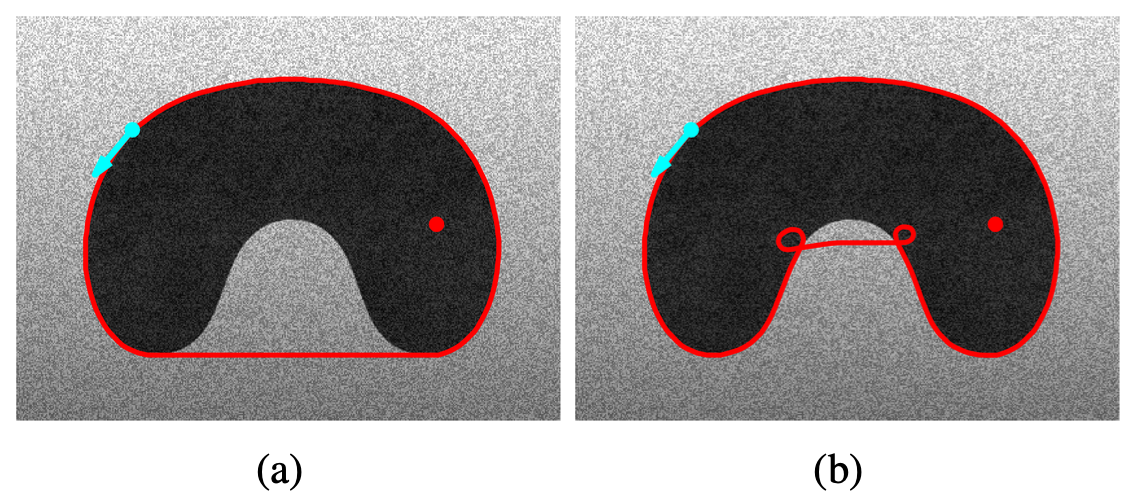}
\caption{\textbf{a} and \textbf{b}: Geodesic paths associated with the Elastica-Convexity model, which are generated respectively with  and  without the angular obstacle $\{\theta_p\}$ in the angular domain $\bS^1$. The red dots are the point $z$ and the cyan dots and arrows indicate the source point $\fp=(p,\theta_p)$.}
\label{fig_Simplicity}				
\end{figure}

\begin{figure*}[t]
\centering
\begin{minipage}[t]{0.225\textwidth}
\includegraphics[width=4.2cm]{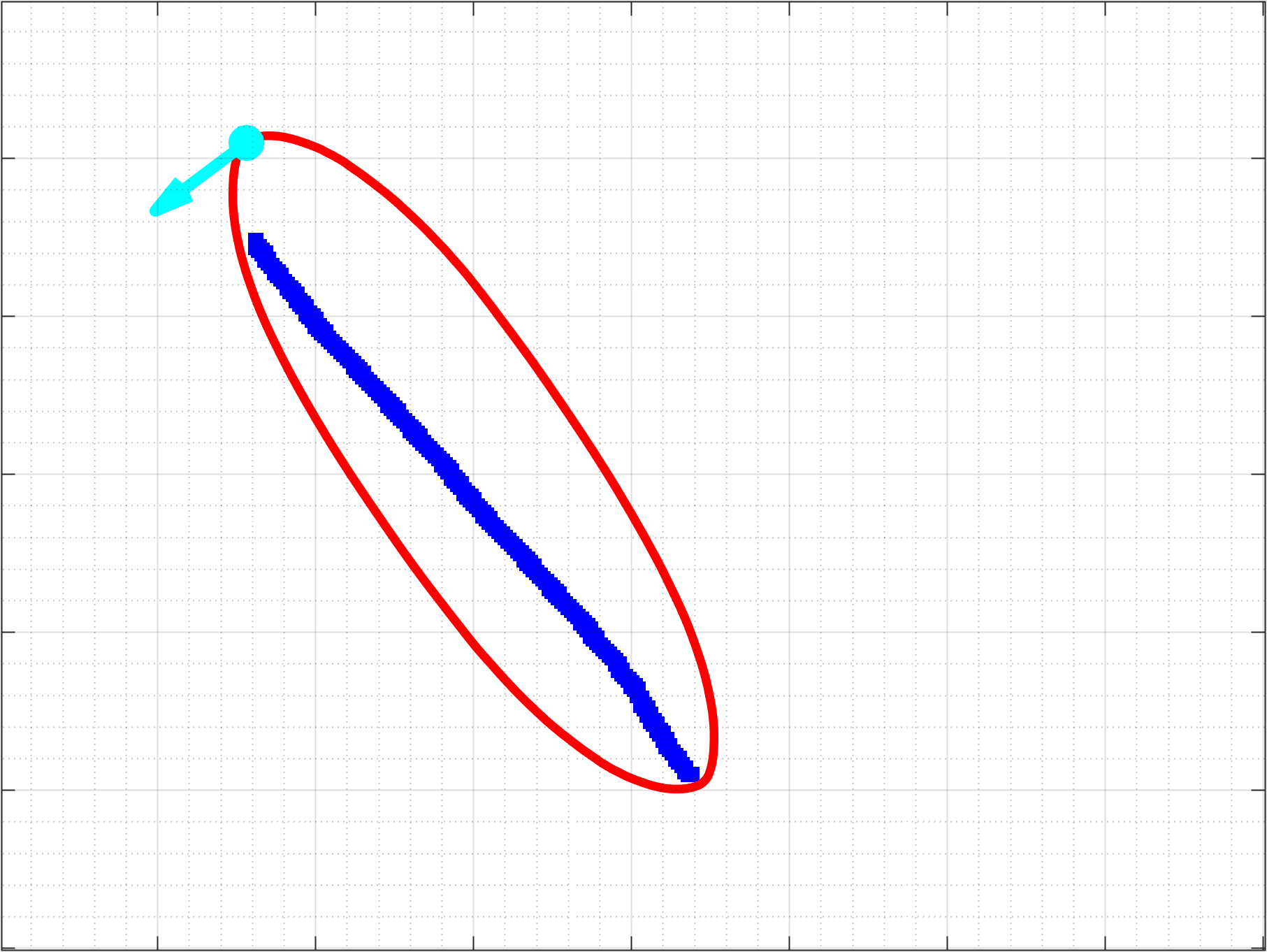}\vspace{1mm}
\includegraphics[width=4.2cm]{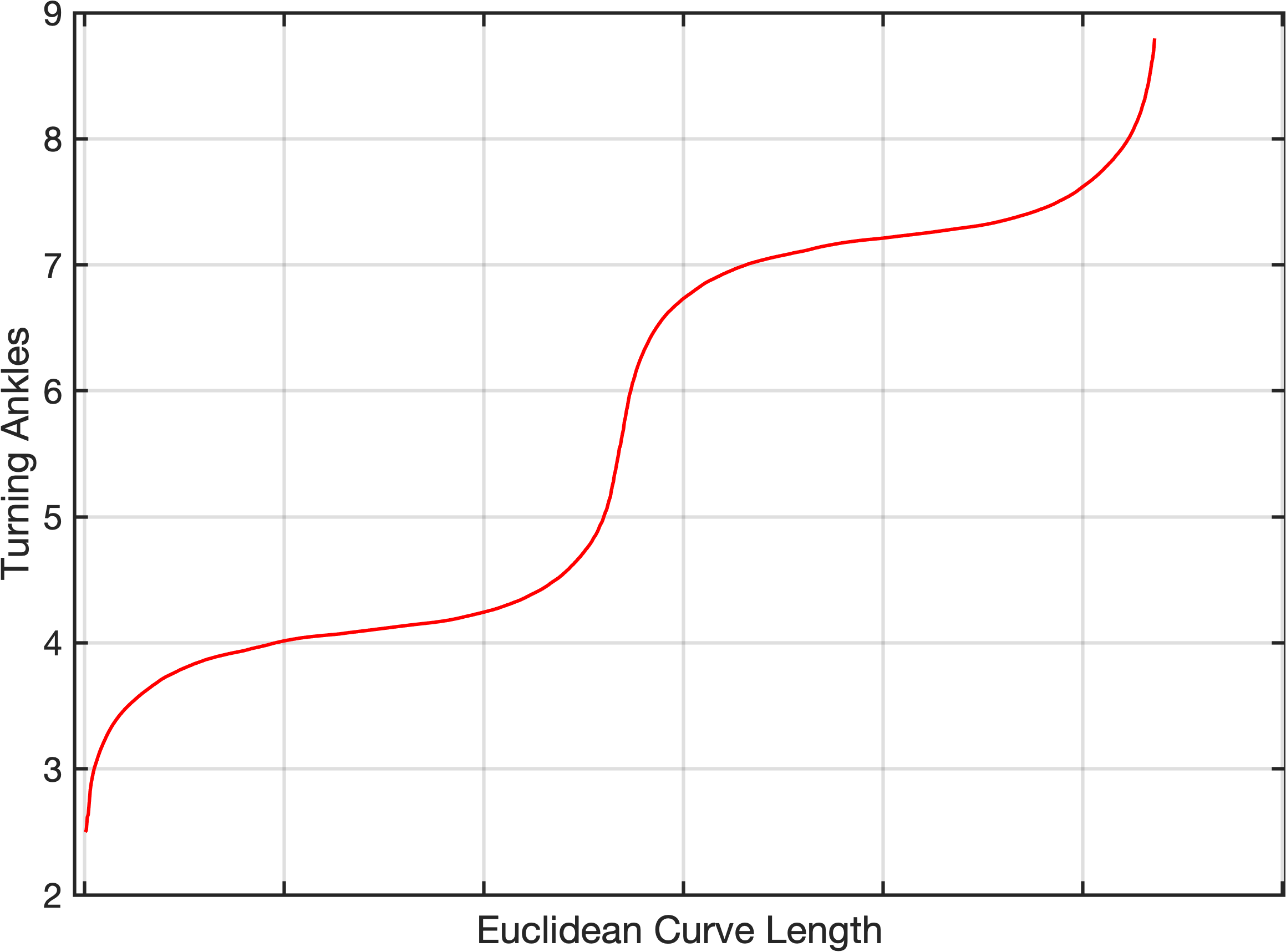}\vspace{1mm}
\centering\footnotesize{RSF-Convexity}\vspace{1mm}
\end{minipage}
\begin{minipage}[t]{0.225\textwidth}
\includegraphics[width=4.2cm]{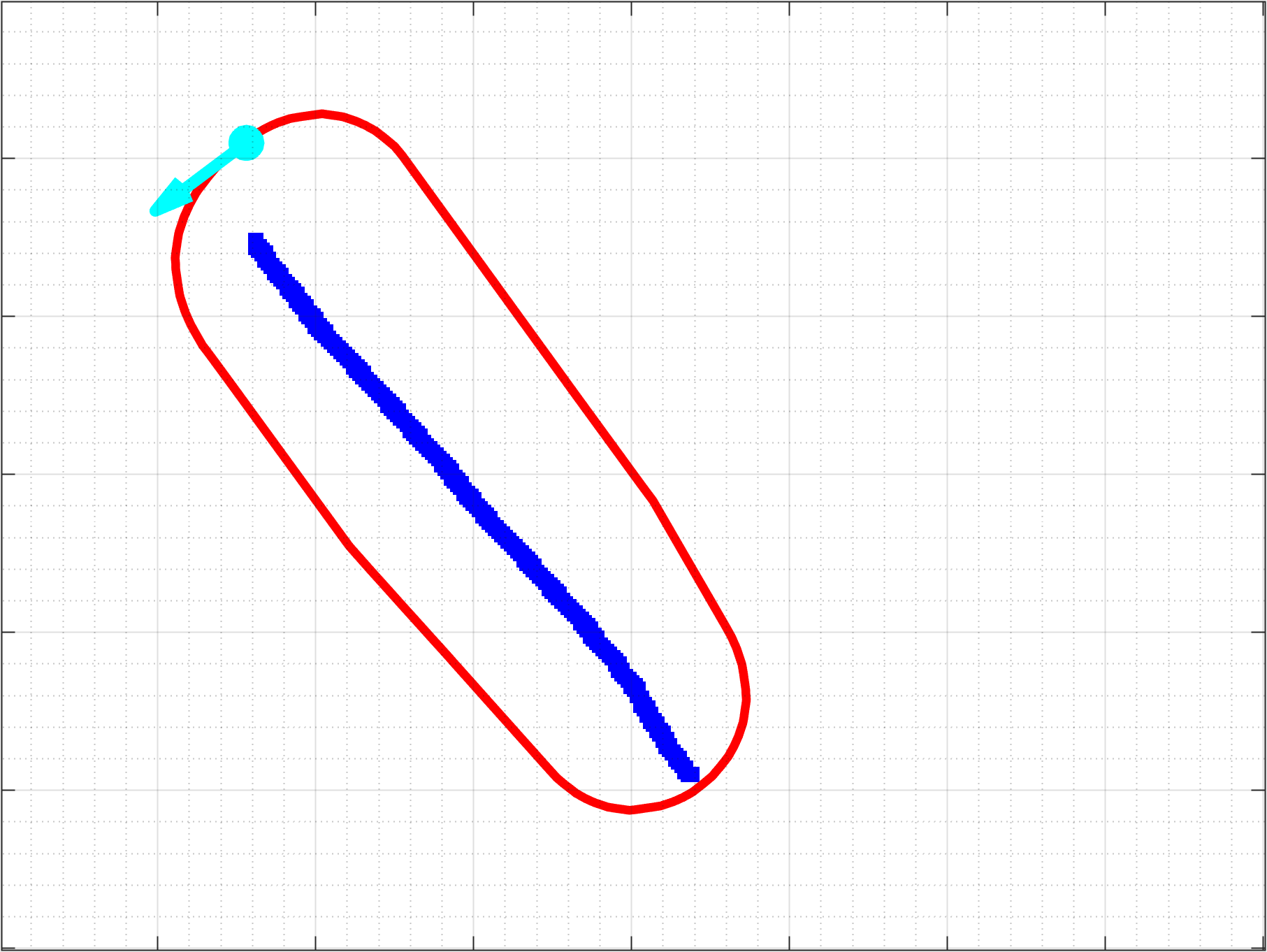}\vspace{1mm}
\includegraphics[width=4.2cm]{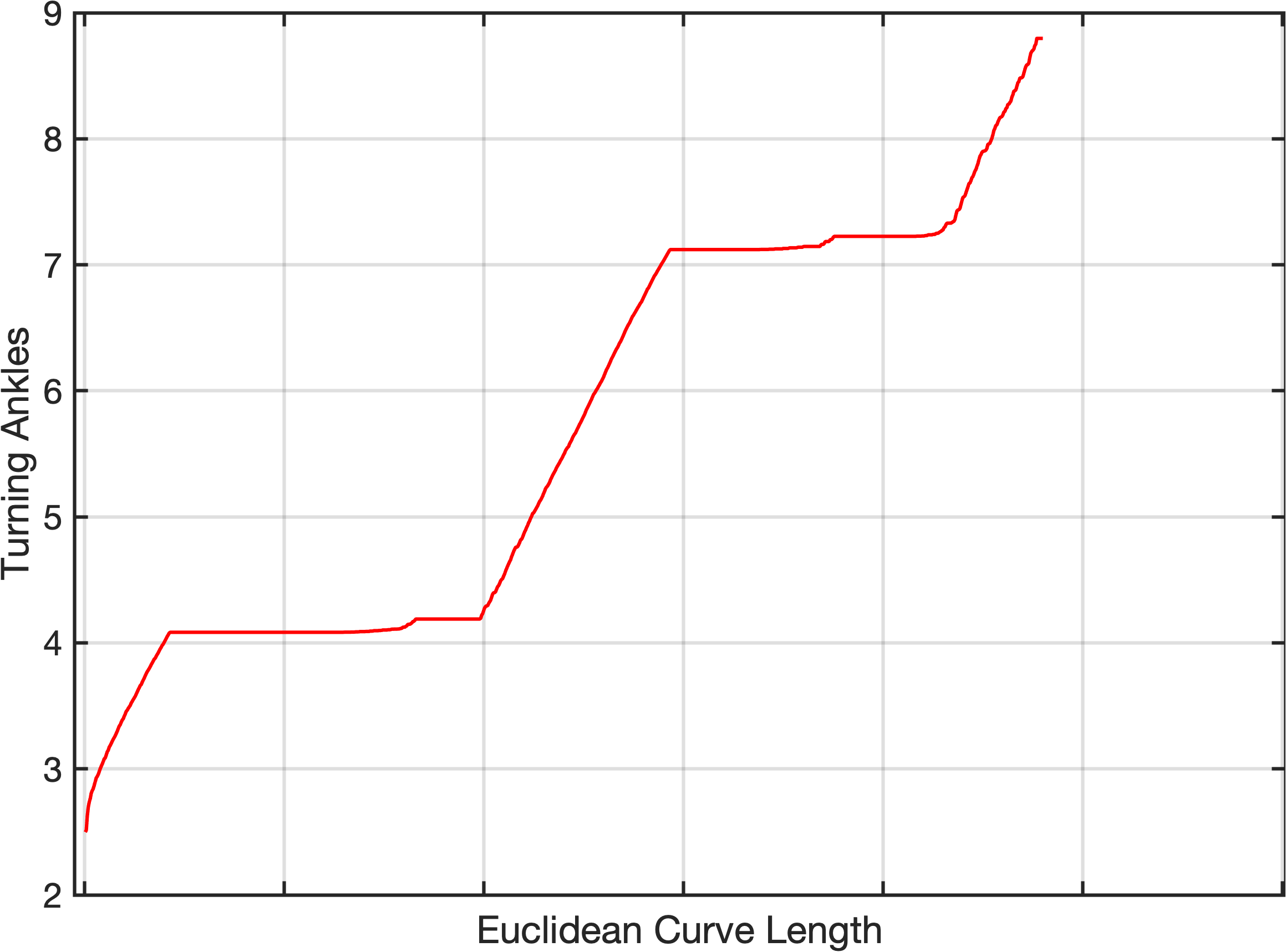}\vspace{1mm}
\centering\footnotesize{Dubins-Convexity}\vspace{1mm}
\end{minipage}
\begin{minipage}[t]{0.225\textwidth}
\includegraphics[width=4.2cm]{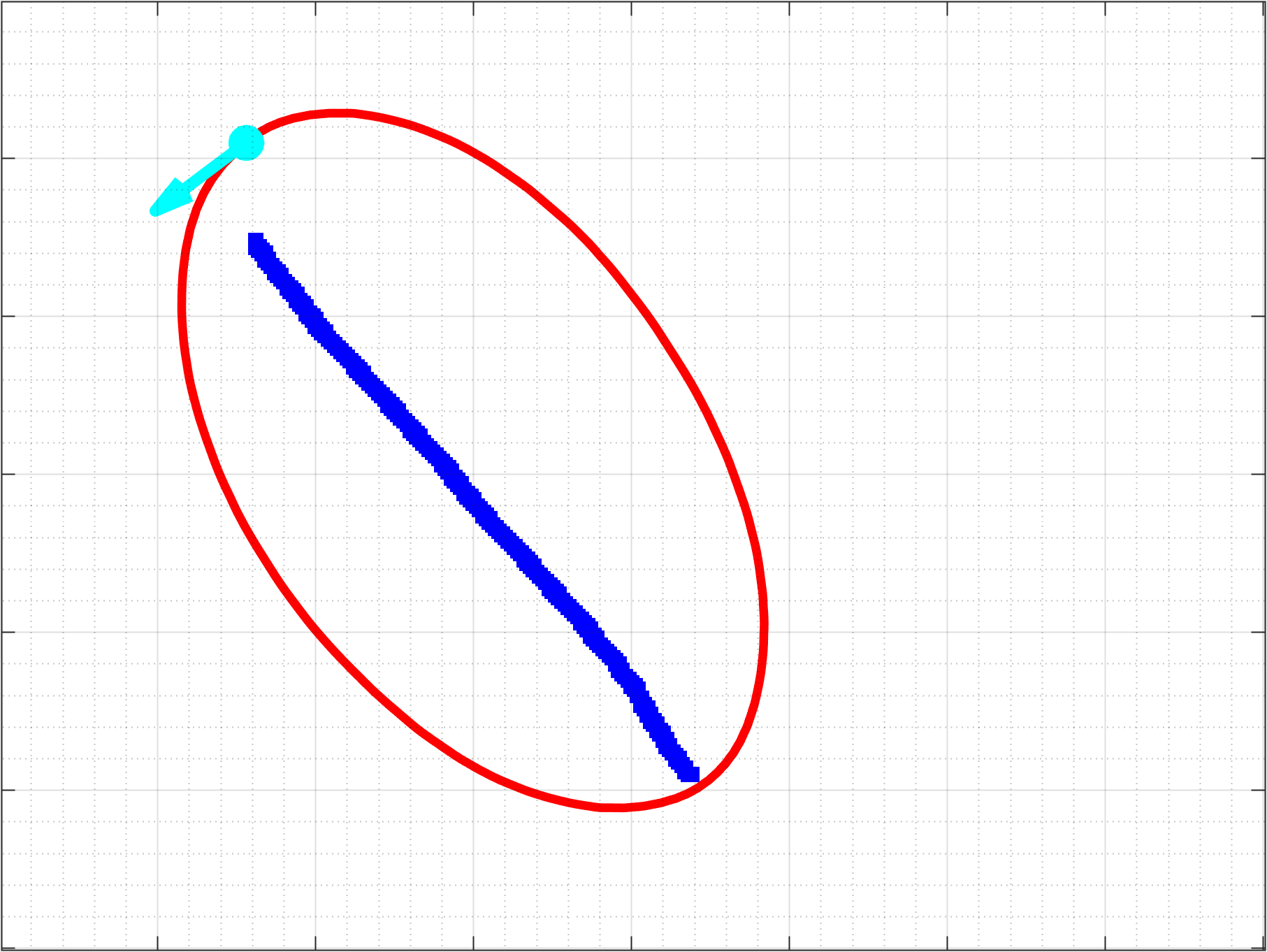}\vspace{1mm}
\includegraphics[width=4.2cm]{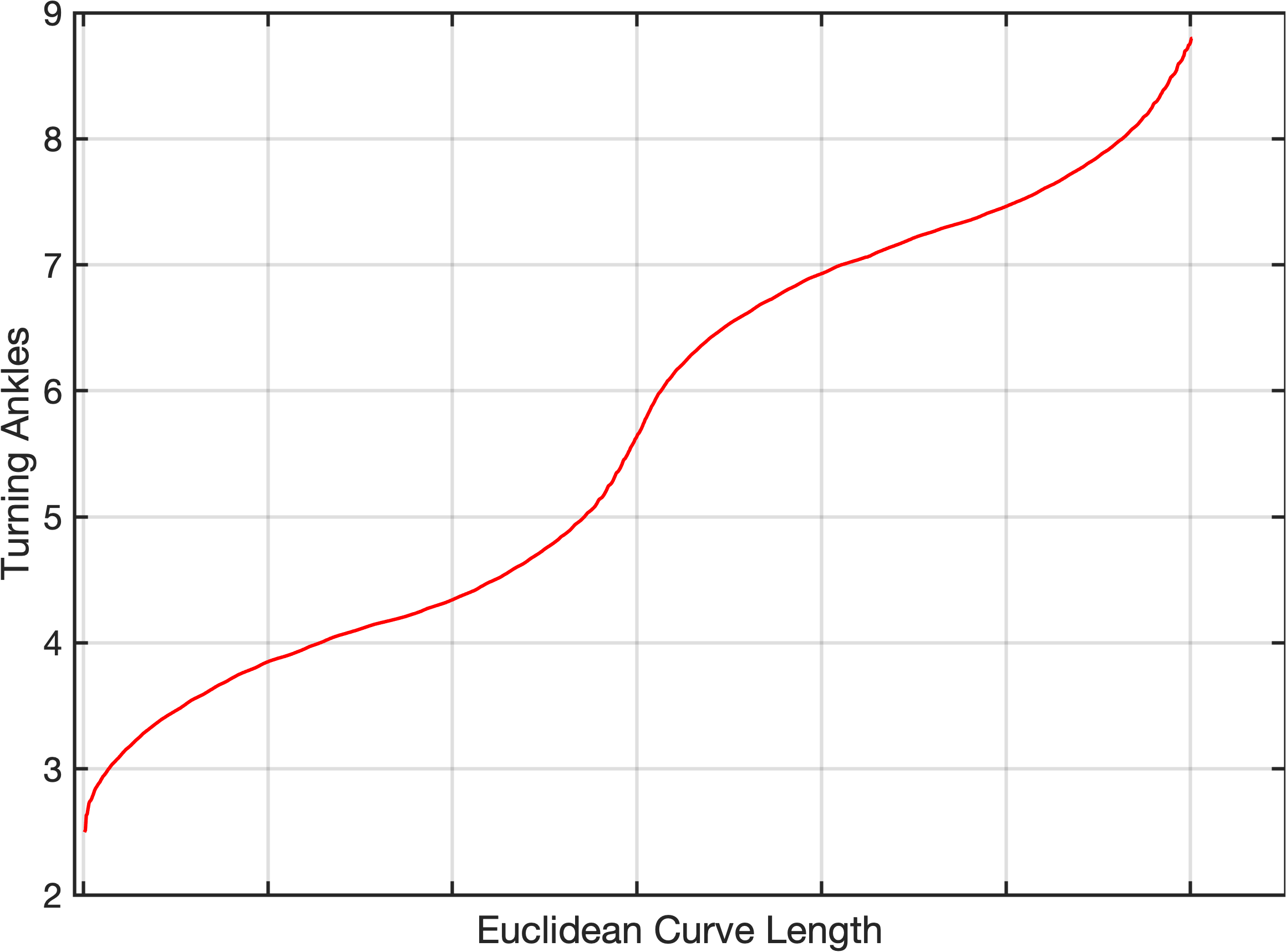}\vspace{1mm}
\centering\footnotesize{Elastica-Convexity}\vspace{1mm}
\end{minipage}
\begin{minipage}[t]{0.225\textwidth}
\includegraphics[width=4.2cm]{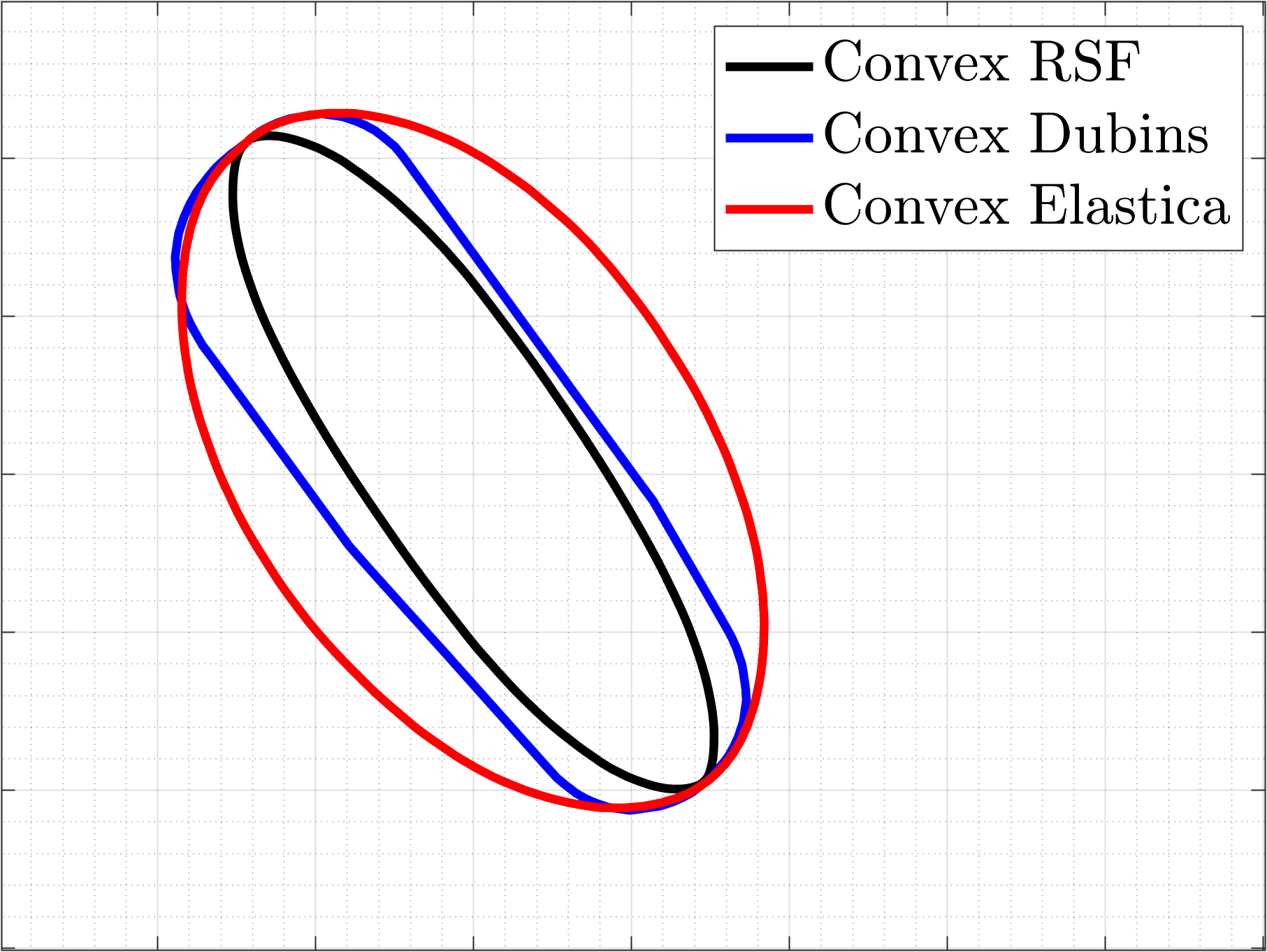}\vspace{1mm}
\end{minipage}
\caption{\textbf{Top}: The geodesic paths (red lines) in columns $1$ to $3$ are respectively derived from the proposed RSF-Convexity, Dubins-Convexity and Elastica-Convexity  models. The blue curvilinear structures are the foreground scribbles, and the cyan dots with arrows represent the source point. \textbf{Bottom}: The plots of the turning angles of the corresponding orientation-lifted geodesic paths.}
\label{fig_Uniform}				
\end{figure*}

\subsubsection{Orientation-lifted Velocity}
The weighted curve length~\eqref{eq_RandersAppro} can be theoretically interpreted in the framework of orientation lifting, by choosing the cost $\psi(x,\theta) = \kR(x,\dfnt)$. This leads to the possibility of integrating the region-based homogeneity features and curvature regularization for solving the active contour problems, using a sufficiently small $U$ to ensure the positivity of $\psi(x,\theta)$. However, such an interpretation is not what we do in this paper.  In contrast, we construct the velocity $\psi$ as an exponential cost of  the components of the metric $\kR$. Specifically, we consider 
\begin{equation}
\label{eq_Velocity}
\psi(x,\theta)=
\begin{cases}
\exp\left(\alpha\,\tilde\psi(x,\theta)\right),&\forall x\in U,\\
\infty,&\text{otherwise}, 
\end{cases}
\end{equation}
where $\alpha>0$ is a constant and where the function $\tilde\psi$ is defined as
\begin{equation}
\tilde\psi(x,\theta):=\frac{\|\dfnt\|_{\cM(x)}}{\sup_{(y,\vartheta)\in\bM}\|\dfn_\vartheta\|_{\cM(y)}}+\frac{\mu\<\varpi(x),\dfnt\>}{\sup_{y\in\Omega}\|\varpi(y)\|}.
\end{equation}
This construction of  $\psi$ proves to be very efficient in practice, although the connection with Eqs.~\eqref{Eq_ACEnergy} and \eqref{eq_RandersAppro} is partly lost.

\subsection{Convexity-constrained Active Geodesic Paths}
\label{Subsec_ConvexAC_Ini}
We apply the proposed convexity-constrained models for interactive image segmentation in conjunction with a curve evolution scheme. The evolving curves are the physical projections of the orientation-lifted geodesic paths from the models with convexity shape prior. When the evolution stabilizes, the target boundaries can be delineated by the obtained physical projections. Overall, the considered interactive segmentation algorithm can be divided into two  steps: (i) establishing extra constraints for computing geodesic paths under user-specified annotations, and (ii) evolving the geodesic paths using the proposed geodesic models. In the following, we will present the details for those steps. 

\subsubsection{Extra Constraints for Geodesic Paths}
\label{subsubsec_TightSearchSpace}
\noindent\emph{Scribbles-based Annotations}.
Scribbles are very often taken as initial annotations in interactive segmentation algorithms. Here we propose a way for geodesic-based segmentation, which allows to leverage foreground and background scribbles as extra constraints. In our model, each scribble is regarded as a subregion of the image domain $\Omega$. We can randomly sample a point $x_F$ from each foreground scribble $F\subset\Omega$. The union $([z,x_F]\cup{F})\times\bS^1$ serves as an obstacle such that no curve is allowed to passed through it. 
At the same time, one can choose a point $x_B$ from each background scribble $B\subset\Omega$, yielding a segment $[x_B,q]$ where $q\in\partial\Omega$ is a point subject to  $(z-x_B) \propto (x_B-q)$. Similarly to $([z,x_F]\cup{F})\times\bS^1$, the union $([x_B,q]\cup{B})\times\bS^1$ also forms an obstacle in the search space of geodesic paths.

When numerically computing the geodesic distance values by the HFM method,  any point $\fy$ will be removed from the stencil $\cS(\fx)$ if the segment $[\fx,\fy]$ intersects the obstacles generated by the scribbles and the point $z$.

\noindent\emph{Landmark points-based Annotations}.
In the context of interactive image segmentation, boundary-based annotations are usually carried out by a family of landmark points $x_k\in\Omega$, indexed by $1\leq k \leq \mathfrak{K}$, such that each point $x_k$ is placed at the target boundary. In contrast to traditional approaches~\cite{mille2015combination}, we \emph{do not} impose any order to the points $x_k$. 

Let $\Re_{z}(x_k)$ be a ray line or a half straight line emanating from point $z$ and passing through point $x_k$, and let $q_k$ be the intersection point between  $\Re_{z}(x_k)$ and the target boundary. It is easy to see that the orientation-lifted geodesic paths do not pass through the wall $[z,x_k[\times\bS^1$ and  $]x_k,q_k]\times\bS^1$, due to the convex assumption on their physical projections.  Note that for each index $k$, neither $[z,x_k[$ nor $]x_k,q_k]$ involves the landmark point $x_k$. Numerically, when computing geodesic distances by the HFM method, the walls $[z,x_k[\times\bS^1$ and  $]x_k,q_k]\times\bS^1$ are used to refine the stencils $\cS$. Specifically, a point $\fy\in\cS(\fx)$ should be excluded from $\cS(\fx)$, if the segment  $[\fx,\fy]$ satisfies that $[\fx,\fy]\cap ([z,x_k[\times\bS^1) \neq \emptyset$, or $[\fx,\fy]\cap(]x_k,q_k]\times\bS^1) \neq \emptyset$. As a result, the physical projection of any closed  geodesic paths $\cG$, subject to $\cG(0)=\cG(1)=\fp$, will pass through all landmark points $x_k$ for $1\leq k \leq \mathfrak{K}$.

\noindent\emph{Automatic Detection of the point $z$}.
The point $z$ used for defining the search space of geodesic paths can be  automatically derived from the  user annotations. By the convexity assumption of the target regions, it is natural to  set the point $z$ as the barycenter of the convex hull of the set $\{p,\,x_k;k=1,\cdots, \mathfrak{K}\}$ (resp. $F\cup\{p\}$ with $F$ being all the foreground scribbles)  for the landmark points-based (resp. scribbles-based) annotation way, where $p$ is the physical position of the source point $\fp=(p,\theta_p)$. In this way,  the detected point $z$ must lie at the interior of the target region.

\subsubsection{Active Geodesic Path Evolution Procedure}
\label{subsubsec_ActiveGeos}
In this section, we incorporate the  convexity-constrained geodesic models into the region-based Randers geodesic model~\cite{chen2019region}, such that the image segmentation can preserve the advantages of user annotations, curvature regularization and convexity shape prior.  

Using a curve evolution scheme, the goal is to generate a sequence of closed geodesic paths $\{\cG_j\}_{j\geq 0}$, each of which lies at the space $\bM$ and solves the problem~\eqref{eq_optim} and is such that the physical projection $\sC_j$ is a simple closed and convex planar curve, obeying $\sC_j(\varrho)\in U_j,\forall \varrho\in [0,1]$. In this way, the search space $U_j$ at the $j$-th iteration is used to find the solution to the problem~\eqref{eq_ConVec}, in order to update the velocity $\psi_j$ through Eq.~\eqref{eq_Velocity}. Moreover, as in the region-based geodesic model~\cite{chen2019region}, we choose the search space $U_j$ as a tubular neighbourhood of $\sC_{j-1}$. 

Recall that the initialization for the proposed convexity-constrained geodesic models requires a point $z\in R$ with $R\subset\Omega$ being the target region, and a point $\fp=(p,\theta_p)\in\bM$ such that $p\in\partial{R}$, as described in Section~\ref{subsec_ExtendCG}. In addition to these points,  we take into account two types of interaction ways for building the sequence $\{\cG_j\}_{j\geq 0}$, in order to accommodate complicated situations. Note that the numerical scheme for computing each orientation-lifted geodesic path  $\cG_j$ is presented in Section~\ref{sec_HFM}.

\noindent\emph{Building the initial curves}.
The initial curve $\cG_0=(\sC_0,\eta_0)$ should obey that (i) $\cG_0(0)=\fp$, and (ii) the physical projection $\sC_0$ is simple closed and convex. In order to simplify the initialization process of the proposed segmentation method, we construct the initial curve  $\cG_0$ as a closed minimal path by solving the problem~\eqref{eq_optim}, subject to the user-provided annotations discussed above. 

The  image gradients-based features are independent to the evolving geodesic curves $\cG_j$. As a consequence,  such an initial curve $\cG_0$ can be produced using the edge-based features only. We denote by $\psi_{\rm edge}$ such an edge-based velocity, reading as 
\begin{equation}
\label{eq_EdgeVelocity}
\psi_{\rm edge}(x,\theta)=\exp\left(\frac{\alpha\,\|\dfnt\|_{\cM(x)}}{\sup_{(y,\vartheta)\in\bM}\|\dfn_\vartheta\|_{\cM(y)}}\right),
\end{equation}
which is independent to the tubular neighbourhood $U_j,\,\forall j\geq 0$.

\begin{figure*}[t]
\centering
\includegraphics[width=0.95\textwidth]{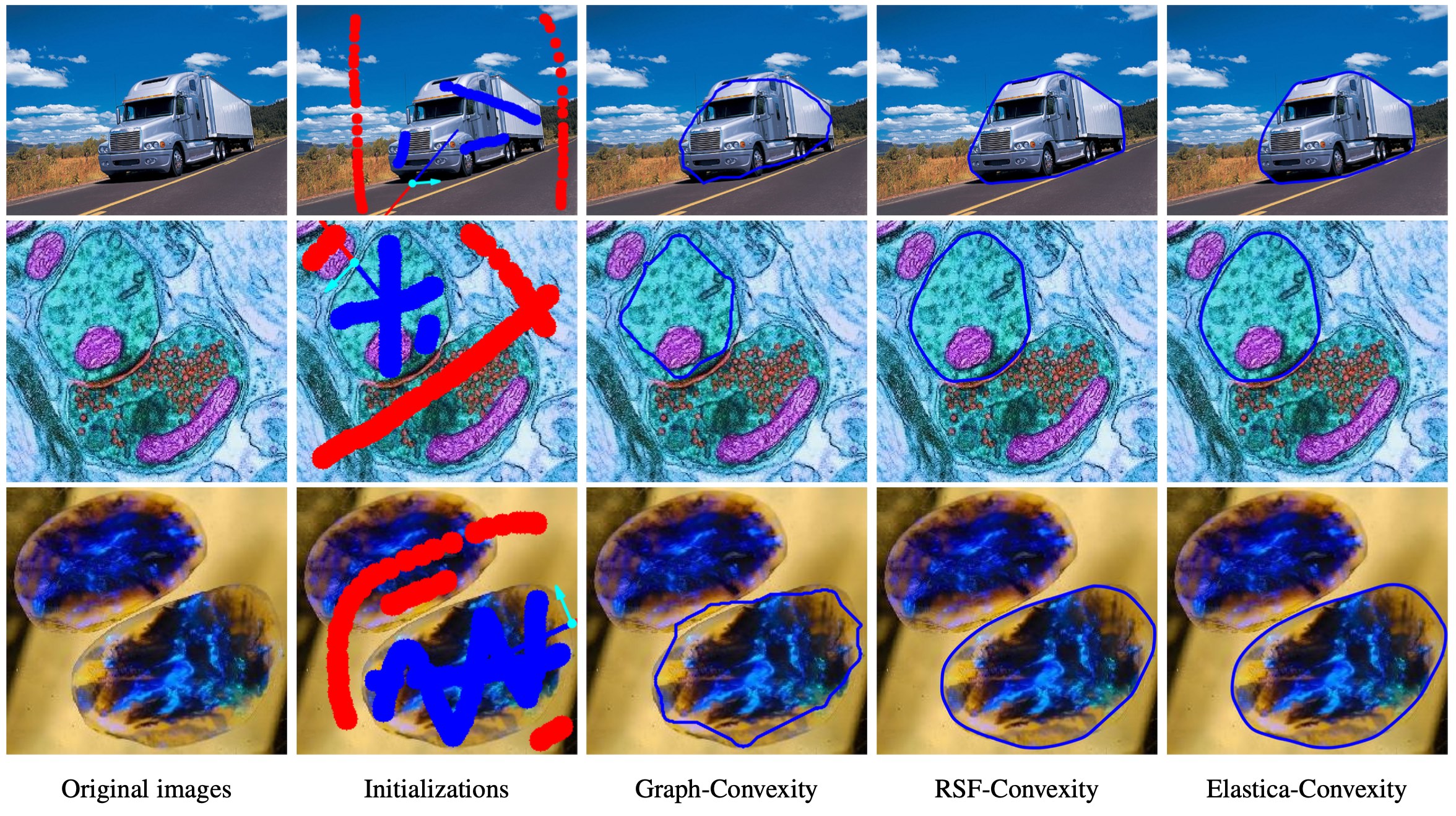}
\caption{Qualitative comparison results with the Graph-Convexity model. Column $1$: Original images. Column $2$: Initializations. The blue and red curvilinear structures denote the scribbles inside and outside the target regions. The source points are visualized via the cyan dots and the respective arrows. Columns $3$ to $5$: Segmentation results from different models. }
\label{fig_TRC}	
\end{figure*}

\begin{figure*}[t]
\centering
\includegraphics[width=0.95\textwidth]{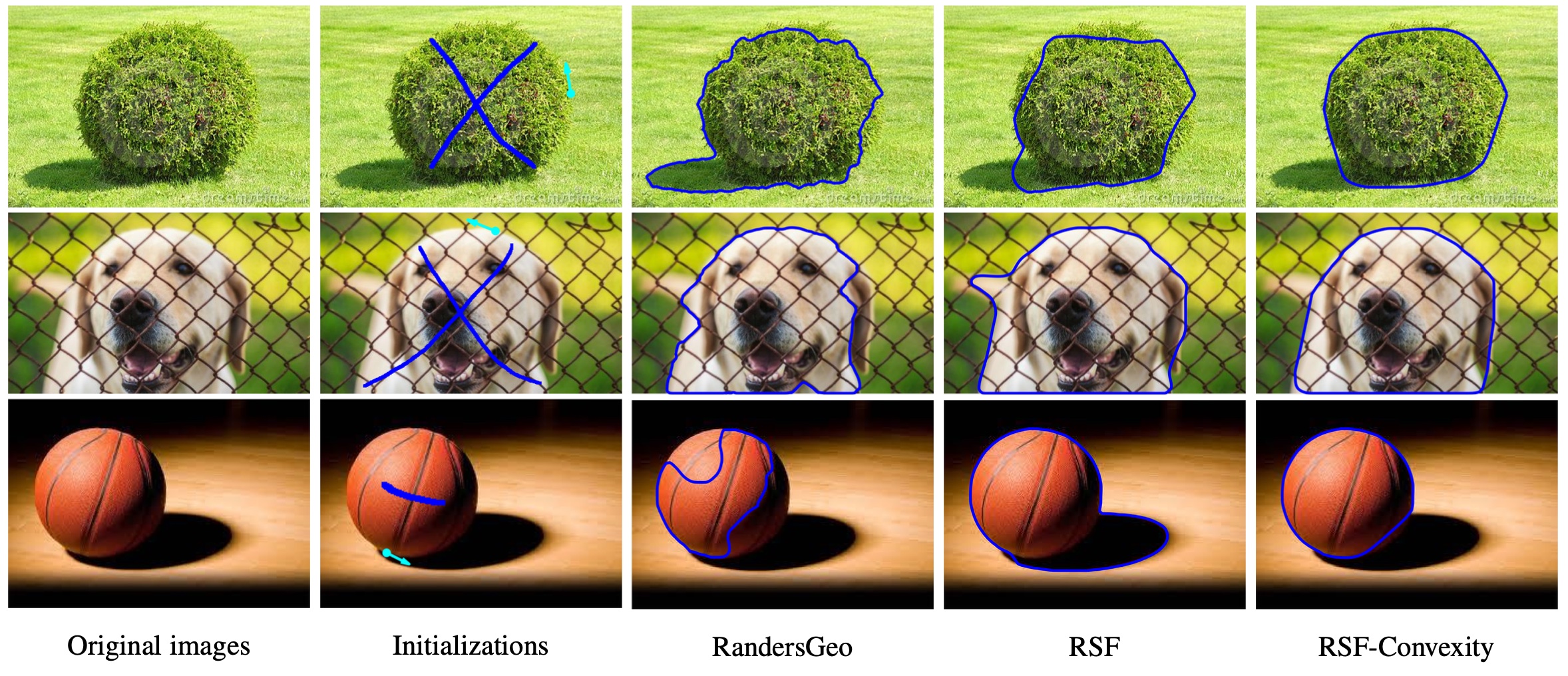}
\caption{Qualitative comparison results with state-of-the-art geodesic models. Columns $1$ and $2$ show the original images and the corresponding annotations. The blue structures in column $2$ denote the scribbles inside  the target regions. The source points are visualized via the cyan dots and the respective arrows. Columns $3$ to $5$: Image segmentation results from different models.}
\label{fig_CompGeos}	
\end{figure*}

\begin{figure*}[t]
\centering
\includegraphics[width=0.95\textwidth]{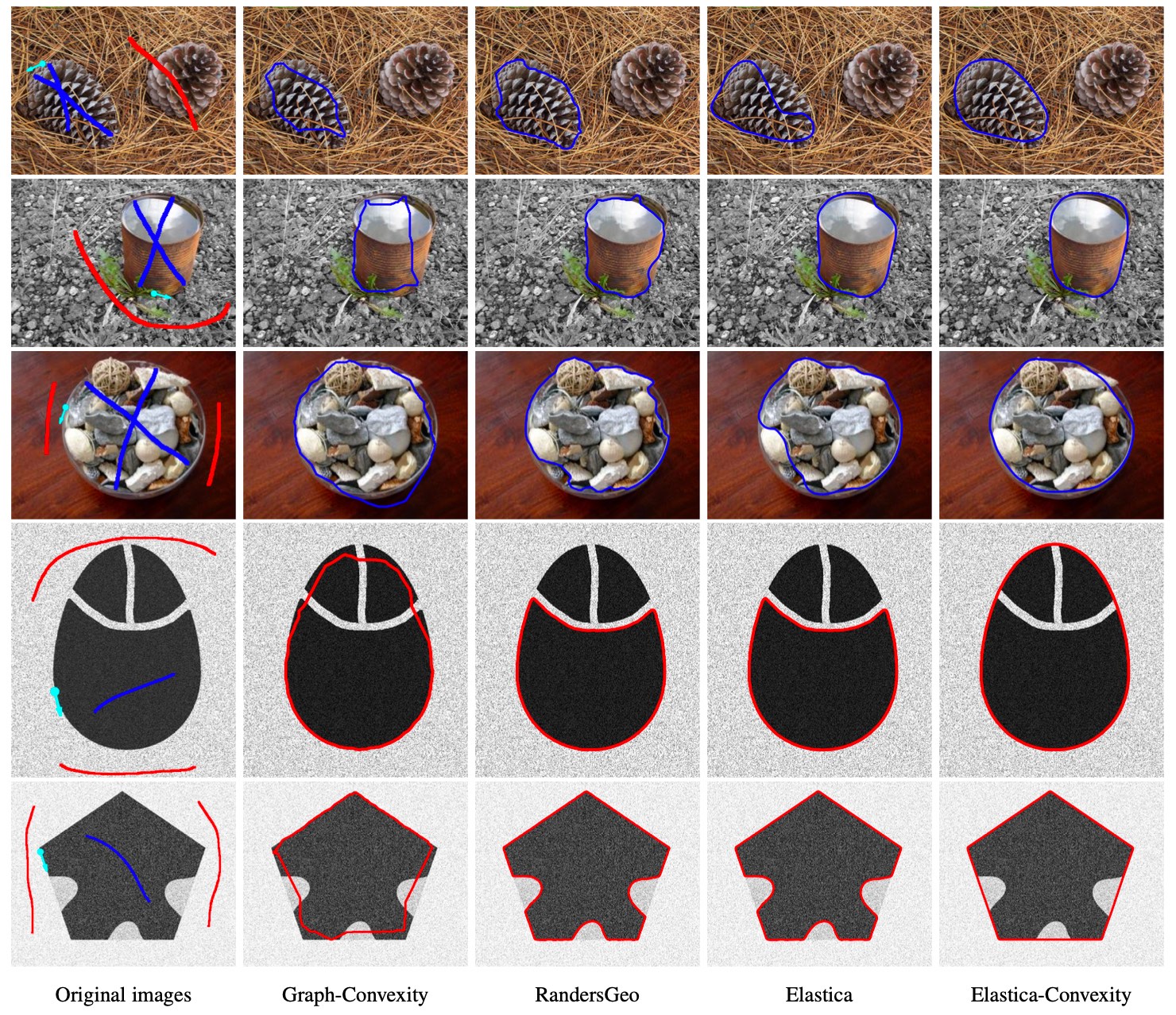}
\caption{Qualitative comparison results with state-of-the-art geodesic models. Columns $1$  show the original images with the initial annotations. The blue and red curvilinear structures denote the scribbles inside and outside the target regions. The cyan dots together with the respective arrows indicate the source points. Columns $2$ to $5$: Image segmentation results from different models.}
\label{fig_Synthetic_Weizzman}	
\end{figure*}

\section{Experimental Results}
\label{sec_Exp}  
In this section, the experiments are dedicated to illustrate the advantages of exploiting the integration of the convexity shape prior and the curvature penalty for image segmentation. After the study on the properties of the proposed models,  we focus on the qualitative and quantitative comparisons against the graph-based segmentation model with convexity constraint (Graph-Convexity)~\cite{gorelick2016convexity}, the region-based Randers geodesic (RandersGeo) model~\cite{chen2019region} and the classical curvature-penalized models involving the RSF~\cite{duits2018optimal}, Dubins~\cite{mirebeau2018fast} and EM elastica models~\cite{chen2017global}. 
Moreover, the wall $\ray\times\bS^1$ (resp. the wall $\ray$) is used to track closed geodesic paths for the classical curvature-penalized models (resp. the RandersGeo model). The intersection tests with the wall $\{\theta_p\}$ are removed for the classical curvature-penalized geodesic models. 

\subsection{Influence of the Obstacle in the Angular Domain}
\label{subsec_InfluenceAngularWall}
Let $\fp=(p,\theta_p)\in\bM$ be a source point. In the HFM method,  the obstacle $\{\theta_p\}$ in the angular domain $\bS^1$ acts as a wall to restrict the fronts of the geodesic distances to propagate within the obstacle-free domain $\tilde\bM$, see Eq.~\eqref{eqdef_obs}. This amounts to an implicit constraint on the total curvature the physical projections of the target geodesic paths, so as to guarantee the simplicity of those planar curves. In Fig.~\ref{fig_Simplicity}, we conduct two tests using the  Elastica-Convexity model, in order to exhibit  the crucial influence of this obstacle $\{\theta_p\}$ in computing geodesic paths whose physical projections are simple closed and convex. In this figure, the source point $\fp=(p,\theta_p)$ is visualized via a pair involving a dot and an arrow of cyan color. The red lines are the physical projections of the extracted orientation-lifted geodesic paths.  Figs.~\ref{fig_Simplicity}a and~\ref{fig_Simplicity}b respectively demonstrate the results of the Elastica-Convexity model with and without the use of the angular obstacle $\{\theta_p\}$ during the computation of geodesic distances. In Fig.~\ref{fig_Simplicity}b, we can see that the absence of $\{\theta_p\}$ leads to self-crossings in the physical projection curve, whose total curvature $\tcurv$ exceeds the limit $2\pi$. Accordingly, the convexity property of the physical projection is lost. In contrast, the physical projection curve in Fig.~\ref{fig_Simplicity}a, where the associated geodesic distance map is numerically estimated in $\tilde\bM$, is simple and convex,  proving the importance and necessity of the obstacle $\{\theta_p\}$ in our convexity-constrained geodesic models.

\subsection{Properties of the Proposed Geodesic Models}
\label{subsec_Properity}
In Fig.~\ref{fig_Uniform}, we illustrate the qualitative differences between the proposed convexity-constrained geodesic models, by choosing a constant velocity $\psi\equiv1$ and setting the parameter $\beta=4$, see Eq.~\eqref{eq_LengthConvexity}, for each tested model. At the top row of columns $1$ to $3$, the red lines represent the physical projections of the  geodesic paths, computed from the RSF-Convexity, Dubins-Convexity and Elastica-Convexity models, respectively.  
At the bottom row, we plot the turning angles (i.e. the angular component $\eta$) of the orientation-lifted geodesic paths $\Gamma=(\gamma,\eta)$, each of which is parameterized by its Euclidean curve length. The physical projections are smooth and convex, and surround the scribbles indicated by the blue color. More specifically, the physical projection associated to the RSF-Convexity model allows the presence of high curvature values, which also can be observed from the plot of the turning angles shown at the bottom row of column $1$. At the top row of column $2$, the physical projection from the Dubins-Convexity model can be approximately divided into straight segments and parts of circles, which is also discussed in~\cite{mirebeau2019hamiltonian}. Such an observation can be verified by the plot of turning angles, the graph of which is the concatenation of straight segments of different slopes. Moreover,  we notice that the plots of the turning angles for all the convexity-constrained models indicate  non-decreasing curvature values, consistently with~\cref{def_Convex}. Finally, the qualitative differences between the proposed geodesic models are quite similar to those between the original curvature-penalized geodesic models~\cite{mirebeau2019hamiltonian}, due to their relevance.

\subsection{Qualitative Comparisons}

We declare the values of the parameters used in the introduced convexity-constrained models. Specifically, we choose the parameters $\alpha\in\{3,4\}$ and $\mu\in\{0.1,1\}$ to perform the computation of the velocity~\eqref{eq_Velocity} for all models, and  adopt the parameter $\beta\in\{1,1.2\}$ for the relative importance of the curvature regularization in the RSF-Convexity  and Elastica-Convexity models. For the Dubins-Convexity model, the parameter $\beta$ defines a hard constraint on the curvature of the physical projections of the target geodesic paths, as formulated in Eq.~\eqref{eq_CDubins}. This lack of flexibility means that $\beta$ must be finely tuned for each individual image.

\begin{table*}[t]
\centering
\caption{Quantitative comparison of different models by the average and standard deviation values of Jaccard scores on $5$ images shown in Fig.~\ref{fig_Synthetic_Weizzman}.}
\label{table_Syntheic}
\setlength{\tabcolsep}{6pt}
\renewcommand{\arraystretch}{1.3}
\begin{tabular}{c c cc c cc c cc c cc c cc c}
\shline
IMAGES& &\multicolumn{2}{c}{Row $1$} & & \multicolumn{2}{c}{Row $2$} & &\multicolumn{2}{c}{Row $3$} & &\multicolumn{2}{c}{Row $4$}& &\multicolumn{2}{c}{Row $5$} &\\
\cline{1-1}\cline{3-4}  \cline{6-7} \cline{9-10}\cline{12-13}  \cline{15-16} 
&   & Ave.   & Std.  & & Ave.   & Std.    & & Ave.   & Std.   & & Ave.   & Std.   && Ave.   & Std. \\
\hline
Graph-Convexity&       &$57.85\%$ &$0.04$   &  &$68.74\%$  &$0.05$  & &$88.59\%$ &$0.01$ &  &$92.38\%$ &$0.01$ & &$86.04\%$  &$0.02$\\
RandersGeo&            &$86.23\%$ &$0.02$   &  &$88.15\%$  &$0.02$  & &$88.38\%$ &$0.01$ &  &$72.56\%$ &$0.06$ & &$89.41\%$  &$0$\\
RSF&                   &$88.72\%$ &$0.00$   &  &$90.21\%$  &$0.02$  & &$92.86\%$ &$0.01$ &  &$74.84\%$ &$0.10$ & &$89.90\%$  &$0.01$\\
Dubins&                &$49.90\%$ &$0.06$   &  &$86.83\%$  &$0.02$  & &$85.07\%$ &$0.02$ &  &$74.67\%$ &$0.10$ & &$89.77\%$  &$0$\\
Elastica&              &$89.16\%$ &$0.01$   &  &$91.11\%$  &$0.02$  & &$93.04\%$ &$0.01$ &  &$78.21\%$ &$0.15$ & &$89.91\%$  &$0.01$\\
RSF-Convexity&         &$89.08\%$ &$0.02$   &  &$93.41\%$  &$0.01$  & &$95.63\%$ &$0.00$ &  &$99.60\%$ &$0$    & &$99.50\%$  &$0$\\
Dubins-Convexity&      &$85.96\%$ &$0.02$   &  &$87.60\%$  &$0.03$  & &$89.07\%$ &$0.01$ &  &$99.59\%$ &$0$    & &$99.58\%$  &$0$\\
Elastica-Convexity&    &$90.37\%$ &$0.02$   &  &$93.48\%$  &$0.01$     & &$95.49\%$ &$0$ &  &$99.63\%$ &$0$ & &$99.49\%$  &$0$\\
\shline
\end{tabular}
\end{table*}

\begin{table*}[t]
\centering
\caption{Quantitative comparison of different models (applied with Scribbles- and Landmark points-based annotations) by the average and standard deviation values of the mean accuracy scores on the Convexity dataset.}
\label{table_HybridAnnotations}
\setlength{\tabcolsep}{6pt}
\renewcommand{\arraystretch}{1.3}
\begin{tabular}{l ccc c ccc c ccc ccc}
\shline
Annotations& &\multicolumn{2}{c}{Scribbles} & &\multicolumn{2}{c}{3 Landmark Points}&   &\multicolumn{2}{c}{4 Landmark Points}& &\multicolumn{2}{c}{5 Landmark Points}&\\
\cline{1-1}\cline{3-4}  \cline{6-7} \cline{9-10} \cline{12-13}
&   & Ave. & Std.         & & Ave.      & Std. & & Ave.  & Std. & & Ave.  & Std.\\
\hline
Graph-Convexity&           &$85.10\%$  &$0.08$   & &$57.24\%$  &$0.24$& &$63.00\%$  &$0.21$& &$71.40\%$  &$0.15$\\
RandersGeo&                &$86.71\%$  &$0.10$   & &$78.61\%$  &$0.12$& &$81.10\%$  &$0.10$& &$84.09\%$  &$0.09$\\
RSF&                       &$88.68\%$  &$0.07$   & &$79.60\%$  &$0.13$& &$84.45\%$  &$0.08$& &$89.33\%$  &$0.05$\\
Dubins&                    &$86.03\%$  &$0.08$   & &$75.27\%$  &$0.15$& &$81.01\%$  &$0.10$& &$86.72\%$  &$0.07$ \\
Elastica&                  &$88.44\%$  &$0.08$   & &$81.70\%$  &$0.11$& &$86.73\%$  &$0.07$& &$90.38\%$  &$0.04$\\
RSF-Convexity   &          &$90.23\%$  &$0.06$   & &$82.22\%$  &$0.11$& &$87.24\%$  &$0.06$& &$90.69\%$  &$0.05$ \\
Dubins-Convexity&          &$87.97\%$  &$0.06$   & &$81.82\%$  &$0.09$& &$86.61\%$  &$0.05$& &$89.18\%$  &$0.07$\\
Elastica-Convexity&        &$90.66\%$   &$0.06$   & &$83.32\%$  &$0.10$& &$88.82\%$  &$0.05$& &$92.07\%$  &$0.03$\\
\shline
\end{tabular}
\end{table*}

Fig.~\ref{fig_TRC} illustrates the qualitative comparison of the Graph-Convexity, RSF-Convexity and Elastica-Convexity models on images from the Convexity dataset~\cite{gorelick2016convexity}. The original images and the initial annotations are respectively demonstrated in columns $1$ and $2$. In column $2$, the  cyan dots represent the physical positions $p$ of the source points $\fp=(p,\theta_p)$, and the arrows are positively collinear to the directions $(\cos\theta_p,\sin\theta_p)$.  The foreground and background scribbles are provided by the Convexity dataset. For fair comparison, we add the segment $[z,p]$ (resp. the segment $[p,q]$ such that $q\in\partial\Omega$ subject to  $(p-z)\propto (q-p)$) to the foreground  scribbles (resp. the background scribbles) to initialize the Graph-Convexity model. In column $3$, the contours from the Graph-Convexity model indeed are approximately convex, but fail to accurately delineate the objective boundaries. In contrast, the RSF-Convexity and Elastica-Convexity models are capable of producing smooth and accurate contours, see columns $4$ and $5$. Moreover, the physical projections of the geodesic paths from both  the RSF-Convexity and Elastica-Convexity models are quite close to each other, despite their different regularities on the curvature.

Fig.~\ref{fig_CompGeos} depicts the comparison results of the introduced RSF-Convexity model against the RandersGeo  model and the classical RSF model on images sampled from the Convexity dataset.  The  annotations shown in column $2$ are exploited to initialize these models. It is known that the RandersGeo model heavily depends on image features made up of image appearance model and the image gradients, as formulated in Section~\ref{subsec_RegionRanders}. Accordingly, when these image features are unreliable for defining the target boundaries, unsatisfactory segmentation results may be yielded, as shown in column $3$ of Fig.~\ref{fig_CompGeos}. This is also the case for the classical RSF model whose velocity function~\eqref{eq_Velocity} are derived using both of the region-based homogeneity measure and image gradient-based features, though the use of the curvature regularity may  increase the local smoothness of the associated planar curves, as depicted in column $4$. By the RSF-Convexity model,  smooth and accurate segmentation contours are constructed  for segmentation in all the three test images, as shown in column $5$, due to the integration of the convexity shape prior and the curvature regularization. 

 The comparison results in Fig.~\ref{fig_Synthetic_Weizzman} are derived from the Graph-Convexity model, the RandersGeo model and the classical EM elastica geodesic model and the Elastica-Convexity model, respectively. The test images in rows $1$ to $3$ are sampled from the Weizmann dataset~\cite{alpert2012image}, and in rows $4$ and $5$ are synthetic images. Among the results, the Elastica-Convexity model indeed obtains satisfactory contours for tracking the target boundaries in all the test images. In contrast, we observe that parts of the segmentation contours derived from the other compared models misalign with the target boundary in some extent. Specifically, the goal of the tests for the images of rows $4$ and $5$ is to separate the quasi-elliptic and polygon shapes from the corresponding background regions. We can see that the Graph-Convexity model can extract  approximately convex shapes, but failing to completely recover the target regions. In addition, we can see that the RandersGeo and EM elastica models lead to non-convex segmentation contours, thus missing some parts of the desired boundaries, due to the influence of the gaps. Note that in Figs.~\ref{fig_TRC} to~\ref{fig_Synthetic_Weizzman}, we utilize the GMM-based  image appearance model for computing the regional homogeneity features, see~\cref{subsec_RegionRanders}. 

\subsection{Quantitative  Comparisons}
In this section, we present the quantitative evaluation results of different models on both synthetic and real images, which are conducted using different initial annotations. In each test, we apply the Jaccard score to measure the accuracy of the segmented regions $\RS$ generated from the tested models. More specifically, the Jaccard score $\JS(\RS,\RG)$ estimates the quality that the segmented region $\RS$ recovers the ground truth region $\RG$ 
\begin{equation}
\JS(\RS,\RG):=\frac{\#(\RS\cap\RG)}{\#(\RS\cup\RG)},
\end{equation}
where $\#(\RS\cap\RG)$ denotes the area of the region $\RS\cap\RG$.

Table~\ref{table_Syntheic} presents the quantitative comparison results of the convexity-constrained geodesic models  against the Graph-Convexity model, the RandersGeo model, and the classical curvature-penalized geodesic models, conducted on the images shown in Fig.~\ref{fig_Synthetic_Weizzman}. For each test image, we sample $10$ planar point distributed evenly along the ground truth boundary, and each point is manually assigned a tangent vector, thus yielding $10$ groups of annotations in conjunction with the scribbles as illustrated in the first column of Fig.~\ref{fig_Synthetic_Weizzman}.  By running the evaluated models  $10$ times per image upon those annotations, the average and standard deviation of the Jaccard scores are demonstrated in Table~\ref{table_Syntheic}. In columns $3$ to $6$ of Table~\ref{table_Syntheic},  the  RSF-Convexity and Elastica-Convexity models obtain higher average values of Jaccard scores than the Graph-Convexity,  RandersGeo and classical curvature-penalized models, due to the benefits from the integration of the convexity shape prior and the curvature penalization. Moreover,  the Dubins-Convexity model achieves comparable performance with the RandersGeo model and the classical curvature-penalized models as shown in columns $2$ to $6$.

The performance of the proposed geodesic models with convexity shape prior  is indeed promising, as shown in Table~\ref{table_Syntheic}, but insufficient to fully access the ability of the proposed models in finding suitable segmentation contours under various initial annotations. For that purpose, we perform the quantitative comparison of the proposed convexity-constrained geodesic models to  the Graph-Convexity model, the RandersGeo model and the classical curvature-penalized geodesic models on the Convexity dataset~\cite{gorelick2016convexity}, under different initial annotations, where the comparison results are presented in Table~\ref{table_HybridAnnotations}.  
The evaluation results taking scribbles as initialization for the evaluated models are illustrated in the second column of Table~\ref{table_HybridAnnotations}. The scribbles used are obtained from the Convexity dataset, augmented with the additional scribbles generated from the segments $\ray\cap\Omega$, where $p$ is the physical position of the source point $\fp$. In contrast with the Graph-Convexity method, the exterior scribbles from the Convexity dataset are \emph{not} used for the evaluated geodesic models. For each test image, we sample $5$ source points $\fp$ whose physical positions $p$ are evenly distributed at the ground truth contour. Each source point and the scribbles are taken as the initialization to set up a run for each tested model. We take the average of the obtained $5$ Jaccard scores, which is referred to as~\emph{mean accuracy score}, for the respective test image. The Ave. and Std. values in this table represent the average and standard deviation of the mean accuracy scores of all images in the Convexity dataset. 

In the landmark points-based tests, we randomly sample $5$ groups of planar points from the ground truth boundary of each test image, and each group is made up of $5$ candidate landmark points. In addition, we take one point as the physical position $p$ of the source point $\fp=(p,\theta_p)$, and manually assign to each $p$ a suitable value $\theta_p$ admitting the edge tangent at $p$. In addition, we choose $2$ (resp. $3$ or $4$) points from the remaining planar points of each sampled group for constructing the $3$ (resp. $4$ or $5$) landmark points-based annotations. Thus we perform $5$ runs for each test models, yielding $5$ Jaccard scores per image. The mean accuracy scores of the obtained Jaccard Scores are shown in columns $3$ to $5$ of Table~\ref{table_HybridAnnotations}. One can see that  the proposed convexity-constrained geodesic models commonly achieve higher mean accuracy scores than the corresponding classical curvature-penalized models without convexity shape prior. Furthermore, the RSF-Convexity and Elastica-Convexity models obtain higher mean accuracy scores than the compared Graph-Convexity model, the RandersGeo model and the classical curvature-penalized models, in terms of both types of annotations.

We perform quantitative comparisons for different geodesic models on $43$ CT images sampled from a dataset~\cite{spencer2019parameter}, where the targets are approximately convex. In order to show the advantages of using convexity constraint, we add Gaussian noise and gaps to those images, see Fig.~\ref{fig_ExampleCT} as an example.  Each model is performed $5$ run per test image, where the initialization is made up of a source point $\fp=(p,\theta)$ and the point $z$ that is the barycentre point of the ground truth region. We randomly choose $5$ physical positions $p$ from the target boundary and the angular components $\theta_p$ is manually set as the counter-clockwise tangent of the boundary at $p$. The evaluated results of different geodesic models are illustrated in Fig.~\ref{fig_Boxplot_CT},  which are exhibited by the boxplots of the mean accuracy scores. In this figure, we observe that the convexity-constrained geodesic models indeed outperform the RandersGeo, RSF and Dubins models, and achieve slightly better performance compared to the classical EM elastica model, which proves the efficiency, accuracy and robustness of our models against the presence of noise and gaps.

\begin{figure}[t]
\centering{
\includegraphics[width=4cm]{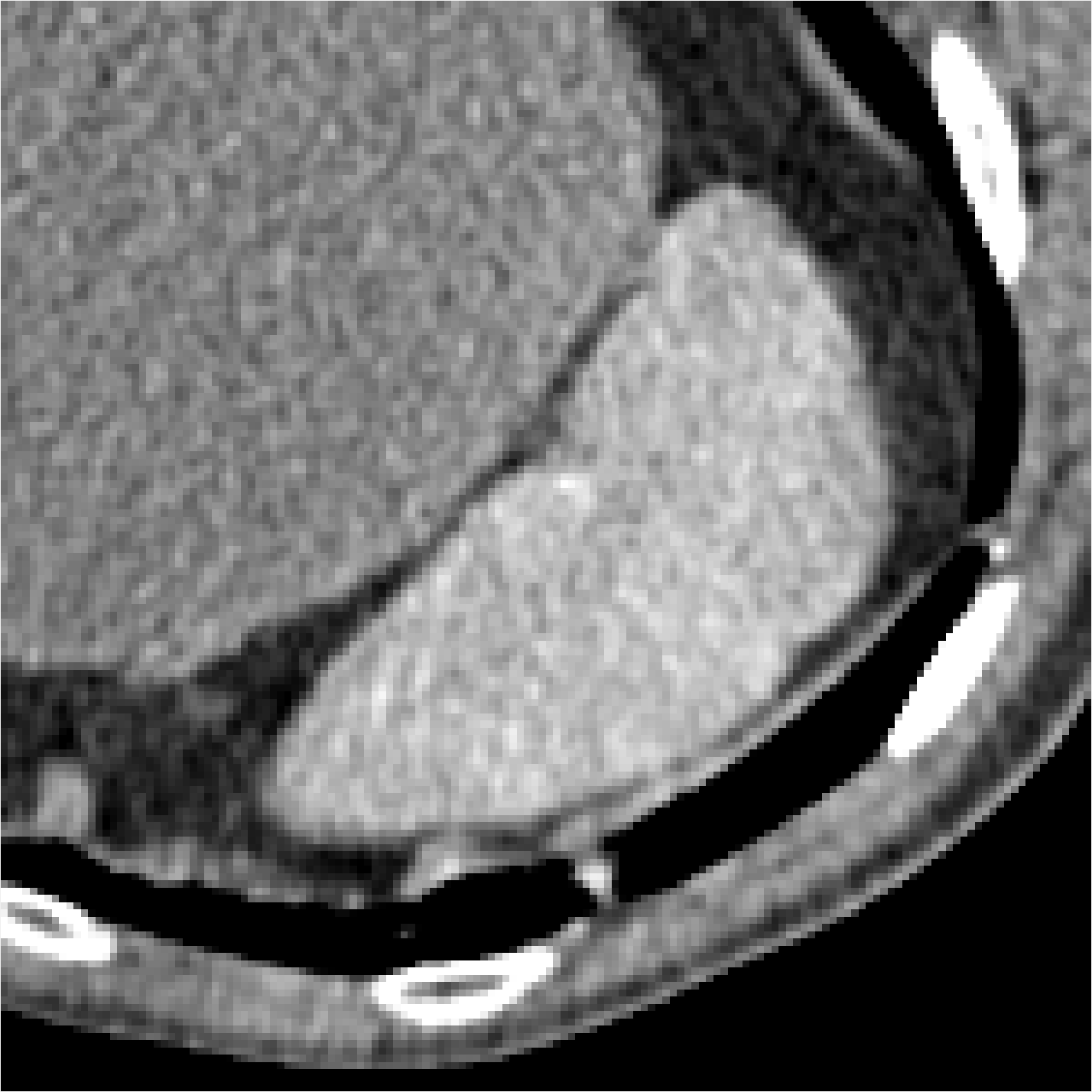}~\includegraphics[width=4cm]{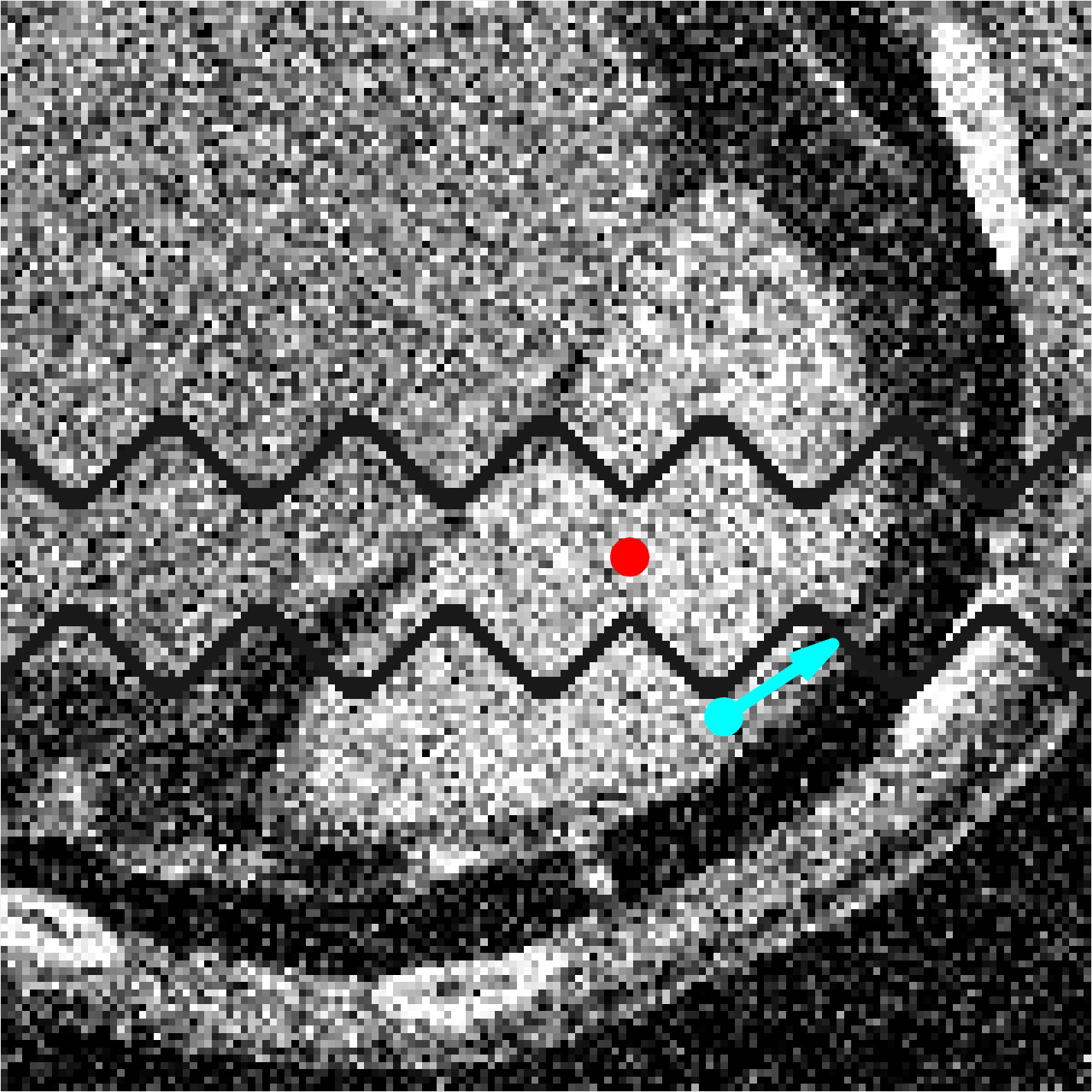}}
\caption{\textbf{Left} An original clean CT image.  \textbf{Right} The image blurred with Gaussian noise and gaps. The red dot indicates the point $z$ and the cyan dot with the arrow denotes a source point $\fp$.}
\label{fig_ExampleCT}				
\end{figure}

\subsection{Discussion on Computation Times}
The computation complexity in our models is dominated by the computation of the GMM-based velocity and the running of the HFM method during curve evolution. We focus on the analysis of the HFM method, since the velocity computation is embarrassingly parallel and could thus easily be accelerated by GPU programming.
The computation complexity $\cO(I K  N \ln N)$ of the HFM method depends on the number $N$ of discretization points of the grid, and the number $IK$ of points in the stencil~\eqref{eqdef:stencil}. Practical run times also modestly depend on the specific parameters and the test cases, such as the relaxation parameter $\ve>0$ used for the curvature penalized models \cite{mirebeau2018fast}, the profile of the cost function and walls, and more importantly the use of a narrow band which constrains the front propagation to a small subregion.
The experiments in this paper were conducted on an Intel Core i9 $3.6$GHz architecture with $96$GB RAM, using a C++ implementation. We take the image shown in Fig.~\ref{fig_Demo} as an example to report the execution time, where the size of the grid is $346\times 599\times 60$.  The HFM method  associated to the Elastica-Convexity model requires around $38$ seconds for tracking the geodesic path  shown  in Fig.~\ref{fig_Demo}d. No narrow band was considered in this test, which relied on a gradients-based velocity.  
A GPU implementation of the eikonal solver~\cite{mirebeau2021massively} led to a strong acceleration, and to running times compatible with user interaction. In the same example of Fig.~\ref{fig_Demo}d, computation time is reduced to $2.5$ seconds on a laptop equipped with an Nvidia 2060 Max-Q GPU. 

\begin{figure}[t]
\centering
\includegraphics[width=8.2cm]{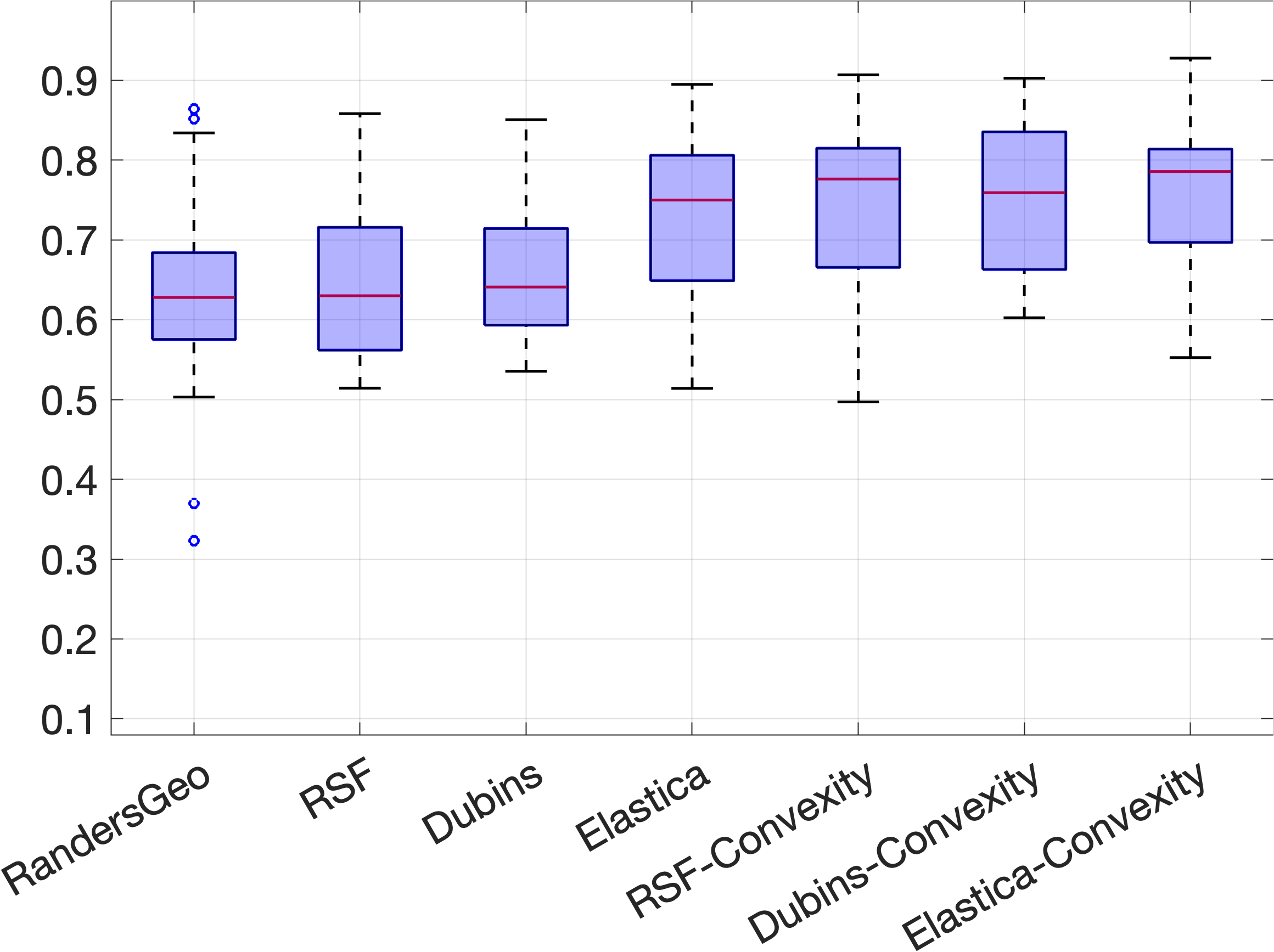}
\caption{Box plots of the mean accuracy scores on $43$ CT images blurred by Gaussian noise and artificial gaps for different geodesic models. }
\label{fig_Boxplot_CT}				
\end{figure}

\section{Conclusion}
\label{sec_Conclusion}

In this paper, we introduce new geodesic models by imposing a convexity shape constraint to the Reeds-Sheep forward,  Dubins car and Euler-Mumford elastica models. The physical projections of  geodesic paths computed from these new models are simple closed and convex planar curves, which are then applied as an efficient solution to the convex shape-constrained active contour problem. In summary, we addressed three crucial issues in order to generate such geodesic paths, including (i) the construction of new geodesic metrics and the associated Hamiltonians, which admit the particular constraint from the convexity shape prior,  (ii) the design of the set collecting all admissible orientation-lifted curves whose physical projections admit the closedness and simplicity restriction, and (iii) the algorithms of applying the proposed geodesic models for interactive image segmentation, such that the user intervention and image  features can be efficiently incorporated into the extraction of geodesic paths and boundary contours. 

\section*{Acknowledgement}
This work is in part supported by the National Natural Science Foundation of China (NO.~61902224), the Shandong Provincial Natural Science Foundation (NO.~ZR2022YQ64), a project in QLUT (NO.~2022PY11), and the French government under management of Agence Nationale de la Recherche as part of the ``Investissements d'avenir'' program, reference ANR-19-P3IA-0001 (PRAIRIE 3IA Institute). Tai is supported by GRF project HKBU-12300819, NSF/RGC Grant N-HKBU214-19 and RC-FNRA-IG/19-20/SCI/01. Shu is supported by the Young Taishan Scholars (NO. tsqn201909137).


\appendix[Relaxation of the RSF and elastica metrics]
Two numerical approaches can be considered for the numerical computation of curvature penalized shortest paths: either approximate the Hamiltonian of the models, which leads to the HFM finite difference scheme framework \cite{mirebeau2019hamiltonian}, or approximate the metric, which leads to semi-Lagrangian schemes considered in the earlier works \cite{duits2018optimal,chen2017global}. The latter approach, briefly recalled below, is more geometric and intuitive, but leads to numerical difficulties: in order to use the efficient fast marching numerical solver, one needs to design stencils obeying a geometric acuteness property depending on the (approximate) metric \cite{sethian2003ordered}, which is difficult for models featuring a strong and generic anisotropy, in particular the Elastica-Convexity and Dubins-Convexity models considered in this paper. For this reason, the HFM approach was preferred.\\
\noindent\textbf{The RSF Model}.
The classical RSF model invokes the metric~\eqref{eq_RSFMetric} to construct the energy of  smooth curves. In order to implement the fast marching method as the numerical solver for the associated eikonal equation, Duits~\emph{et al.}~\cite{duits2018optimal} proposed a new relaxed RSF metric of strong anisotropy to approximate the original RSF metric $\cF^{\rm RS}$. Letting  $\varepsilon\in(0,1)$ be a relaxation parameter, the relaxed RSF metric $\cF^{\rm RS}_\varepsilon:\bM\times\bE\to[0,+\infty]$ has a form of
\begin{equation*}
\cF^{\rm RS}_\varepsilon(\fx,\dot\fx)=\sqrt{\langle \dx,\dfnt\rangle^2_++\beta^2|\dot\theta|^2+
 \varepsilon^{-2}(\|\dx\|^2-\langle\dx, \dfnt\rangle^2_+)},
\end{equation*}
for any point $\fx=(x,\theta)$ and any vector $\dot\fx=(\dx,\dot\theta)$. 
Similarly, the RSF-Convexity metric may be approximated by introducing the additional term $\ve^{-2} (\theta_-)^2$ under the square root. 
However, obtaining similar and numerically tractable approximations of the Elastica-Convexity and Dubins-Convexity metrics is an open question.

\noindent\textbf{The EM Elastica Model}. Chen~\emph{et al.}~\cite{chen2017global} introduced a  Finsler metric with a Randers form for approximately computing optimal elastica curves. This particular Randers metric is established over the orientation-lifted space $\bM$ and can be expressed as
\begin{equation*}	
\cF^{\rm EM}_\varepsilon(\fx,\dot\fx)=\sqrt{\varepsilon^{-2}\|\dx\|^2+2\varepsilon^{-1}\beta^2|\dot\theta|^2}-(\varepsilon^{-1}-1)\langle \dx,\dfnt\rangle.
\end{equation*}
In~\cite[Appendix 2]{chen2017global}, the authors prove that the geodesic paths derived from the metric $\cF^{\rm EM}_\varepsilon$ uniformly converge to those associated to the original elastica metric $\cF^{\rm EM}$, see~Eq.~\eqref{eq_FEMMetric}, as $\varepsilon\to 0$. The convergence results was adapted and further generalized from the proof of those in the RSF model~\cite[Appendix A]{duits2018optimal}.



%
%

\ifCLASSOPTIONcaptionsoff
  \newpage
\fi



\bibliographystyle{IEEEtran}
\bibliography{minimalPaths}

%
%
%

%

%



\end{document}